\title{Efficient Exploration in Resource-Restricted Reinforcement Learning}
\author{
Zhihai Wang\textsuperscript{\rm 1}\thanks{Equal contribution.}, 
Taoxing Pan\textsuperscript{\rm 1}\footnotemark[1], 
Qi Zhou\textsuperscript{\rm 1},
Jie Wang\textsuperscript{\rm 1,2}\thanks{Corresponding author.}
}
\newtheorem{lemma}{Lemma}
\newtheorem{theorem}{Theorem}
\DeclareMathOperator*{\argmax}{arg\,max}
\begin{document}

\maketitle

\begin{abstract}

In many real-world applications of reinforcement learning (RL), performing actions requires consuming certain types of resources that are non-replenishable in each episode. Typical applications include robotic control with limited energy and video games with consumable items. In tasks with non-replenishable resources, we observe that popular RL methods such as soft actor critic suffer from poor sample efficiency. The major reason is that, they tend to exhaust resources fast and thus the subsequent exploration is severely restricted due to the absence of resources. To address this challenge, we first formalize the aforementioned problem as a resource-restricted reinforcement learning, and then propose a novel \textbf{r}esource-\textbf{a}ware \textbf{e}xploration \textbf{b}onus (RAEB) to make reasonable usage of resources. An appealing feature of RAEB is that, it can significantly reduce unnecessary resource-consuming trials while effectively encouraging the agent to explore unvisited states. Experiments demonstrate that the proposed RAEB significantly outperforms state-of-the-art exploration strategies in resource-restricted reinforcement learning environments, improving the sample efficiency by up to an order of magnitude.
\end{abstract}

\section{Introduction}



Performing actions requires consuming certain types of resources in many real-world decision-making tasks \cite{electric_robots,kempka2016vizdoom,bhatia2019resource,resource_management_cloud}. For example, the availability of the action ``jump'' depends on the remaining energy in robotic control \cite{grizzle2014models}. Another example is that the availability of a specific skill depends on a certain type of in-game items in video games \cite{vinyals2019grandmaster}.
Therefore, making reasonable usage of resources is one of the keys to success in decision-making tasks with limited resources. \cite{bhatia2019resource}.
In recent years, reinforcement learning (RL) has achieved success in decision-making tasks from games to robotic control in simulators \cite{trpo, silver2017mastering}. However, RL with limited resources has not been well studied, and RL methods can struggle to make reasonable usage of resources. 
In this paper, we take the first step towards studying RL with resources that are non-replenishable in each episode, which can be extremely challenging and is very common in the real world. Typical applications include robotic control with non-replenishable energy \cite{electric_robots,kormushev2013reinforcement}
and  video games with non-replenishable in-game items \cite{kempka2016vizdoom, resnick2018pommerman}.

This paper begins by evaluating several popular RL algorithms, including proximal policy optimization (PPO) \cite{PPO} and soft actor critic (SAC) \cite{pmlr-v80-haarnoja18b}, in various tasks with non-replenishable resources. We find these algorithms struggle to explore the environments efficiently and suffer from poor sample efficiency. Moreover, we empirically show that the surprise-based exploration method \cite{Achiam17}, one of the state-of-the-art advanced exploration strategies, still suffers from poor sample efficiency. Even worse, we observe that some of these algorithms struggle to perform better than random agents in these tasks. We further perform an in-depth analysis of this challenge, and find that the exploration is severely restricted by the resources. As the available actions depend on the remaining resources and these algorithms tend to exhaust resources rapidly, the subsequent resource-consuming exploration is severely restricted. However, resource-consuming actions are usually significant for achieving high rewards, such as consuming consumable items in video games. Therefore, these algorithms suffer from inefficient exploration. (See Section \ref{sec:inefficient_exploration})





To address this challenge, we first formalize the aforementioned problems as a \textbf{r}esource-\textbf{r}estricted \textbf{r}einforcement \textbf{l}earning (R3L), where the available actions largely depend on the remaining resources. Specifically, we augment the Markov Decision Process \cite{reinforcement_learning} with resource-related information that is easily accessible in many real-world tasks, including a key map from a state to its remaining resources. 
We then propose a novel \textbf{r}esource-\textbf{a}ware \textbf{e}xploration \textbf{b}onus (RAEB) that significantly improves the exploration efficiency by making reasonable usage of resources. Based on the observation that the accessible state set of a given state---which the agent can possibly reach from the given state---largely depends on the remaining resources of the given state and large accessible state sets are essential for efficient exploration in these tasks, RAEB encourages the agent to explore unvisited states that have large accessible state sets. Specifically, we quantify the RAEB of a given state by its novelty and remaining resources, which simultaneously promotes novelty-seeking and resource-saving exploration. 

To compare RAEB with the baselines, we design a range of robotic delivery and autonomous electric robot tasks based on  Gym \cite{gym} and Mujoco \cite{mujoco}. In these tasks, we regard goods and/or electricity as resources (see Section \ref{sec:inefficient_exploration}). Experiments demonstrate that the proposed RAEB significantly outperforms state-of-the-art exploration strategies in several challenging R3L environments, improving the sample efficiency by up to an order of magnitude. Moreover, we empirically show that our proposed approach significantly reduces unnecessary resource-consuming trials while effectively encouraging the agent to explore unvisited states.

\section{Related Work}
\textbf{Decision-making tasks with limited resources} 
Some work has studied the applications of reinforcement learning in specific resource-related tasks, such as job scheduling \cite{zhang1995reinforcement,resource_management_cloud}, resource allocation \cite{tesauro2005online}, and unpowered glider soaring \cite{learning_to_soar}. However, they are application-customized, and thus general problems of RL with limited resources have not been well studied. In this paper, we take the first step towards studying RL with resources that are non-replenishable in each episode.

\noindent\textbf{Intrinsic reward-based exploration}
Efficient exploration remains a major challenge in reinforcement learning. It is common to generate intrinsic rewards to guide efficient exploration. Existing intrinsic reward-based exploration methods include count-based \cite{MBIE,NIPS2016_6383}, prediction-error-based \cite{pmlr-v70-pathak17a,Achiam17}, information-gain-based \cite{VIME, max}, and empowerment-based \cite{empowerment} methods. Our proposed RAEB is also an intrinsic reward. However, RAEB promotes efficient exploration in R3L tasks while previous exploration methods struggle to explore R3L environments efficiently.

\noindent\textbf{Constrained reinforcement learning}
Constrained reinforcement learning is an active topic in RL research. Constrained reinforcement learning methods \cite{cpo,Primal-Dual,pmlr-v129-chen20b} address the challenge of learning policies that maximize the expected return while satisfying the constraints. Although there exist resource constraints that the quantity of resources is limited in R3L problems, the constraints can be easily introduced to the environment or the agent. For example, we can set the resource-consuming actions unavailable if the resources are exhausted in R3L environments. Thus, any policy can satisfy the resource constraints. In contrast, the major challenge of R3L problems lies in efficient exploration instead of finding policies that satisfy  resource constraints.


\section{Preliminaries}
We introduce the notation we will use throughout the paper. We consider an infinite horizon Markov Decision Process (MDP) denoted by a tuple $(\mathcal{S}, \mathcal{A}, p, r, \gamma)$, where the state space $\mathcal{S}\subset \mathbb{R}^m$ and the action space $\mathcal{A} \subset \mathbb{R}^n$ are continuous, $p: \mathcal{S}\times \mathcal{A} \times \mathcal{S}\to [0, \infty)$ is the transition probability distribution, $r: \mathcal{S}\times \mathcal{A} \to [r_{\min} , r_{\max} ]$ is the reward function, $\gamma \in (0,1)$ is a discount factor. Let the policy $\pi$ maps each state $s\in \mathcal{S}$ to a probability distribution over the action space $\mathcal{A}$. That is, $\pi(\cdot|s)$ is the probability density function (PDF) over the action space $\mathcal{A}$. Define the set of feasible policies as $\Pi$. 
In reinforcement learning, we aim to find the policy $\pi$ that maximizes the cumulative rewards  

\vspace{-0.3cm}
\begin{align}
\label{eqn:objective}
\pi^* = \argmax_{\pi\in \Pi} \mathbb{E}_\pi \left[\sum_{t=0}^\infty \gamma^t r(s_t,a_t)\right],
\end{align}
where $s_0 \sim \mu$, $a_t \sim \pi(\cdot| s_t)$, $s_{t+1} \sim p(\cdot| s_t, a_t)$, and $\mu$ is the initial distribution of state.

We define the probability density of the next state $s^\prime$ after one step transition following the policy $\pi$ as

\vspace{-0.3cm}
\begin{align*}
p^\pi_1(s^\prime| s) = \int_{a\in \mathcal{A}} p(s^\prime| s,a) \pi(a| s)da.
\end{align*}
Then, we recursively calculate the probability density of the state $s^\prime$ after $t + 1$ steps transition following the policy $\pi$ by

\vspace{-0.3cm}
\begin{align*}
p^\pi_{t+1}(s^\prime| s) = \iint_{s_t,a_t} p(s^\prime| s_{t},a_{t}) &\pi(a_{t}| s_{t})\\
&p_{t}^\pi(s_{t}| s)da_{t} ds_t.
\end{align*}

\section{Resource-Restricted \\ Reinforcement Learning (R3L)}




\begin{figure*}[t]
    \centering
    \begin{subfigure}{0.28\textwidth}
        \includegraphics[width=\textwidth]{./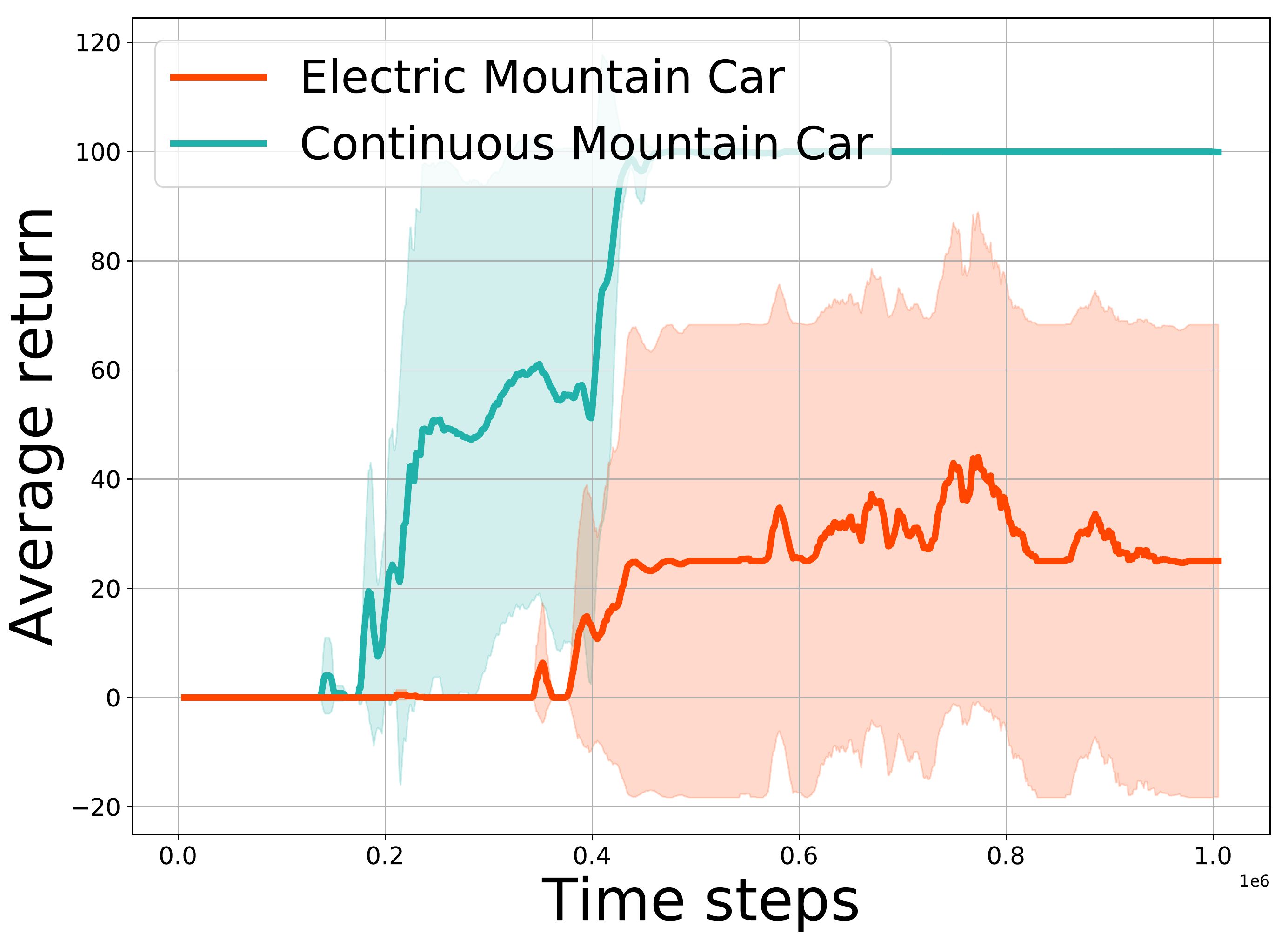}
        \caption{Performance of Surprise method on two Mountain Car environments.}
        \label{fig:car_with_or_without_fuel}
    \end{subfigure}
    \hspace{5mm}
    \begin{subfigure}{0.28\textwidth}
        \includegraphics[width=\textwidth]{./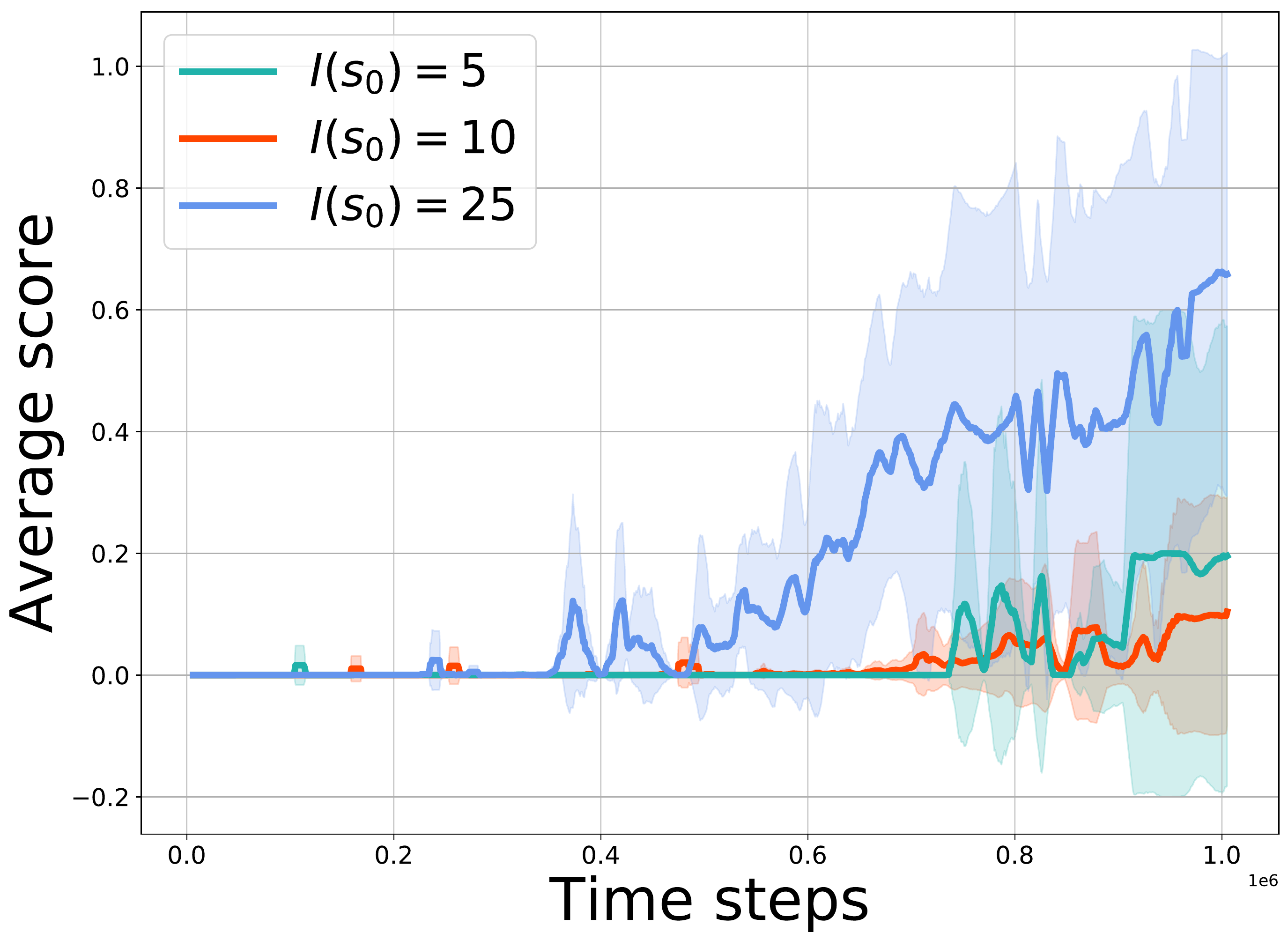}
        \caption{Performance of Surprise method with different initial resources.}
        \label{fig:Surprise_car_decreasing_resource}   
    \end{subfigure}
    \hspace{5mm}
    \begin{subfigure}{0.28\textwidth}
        \includegraphics[width=\textwidth]{./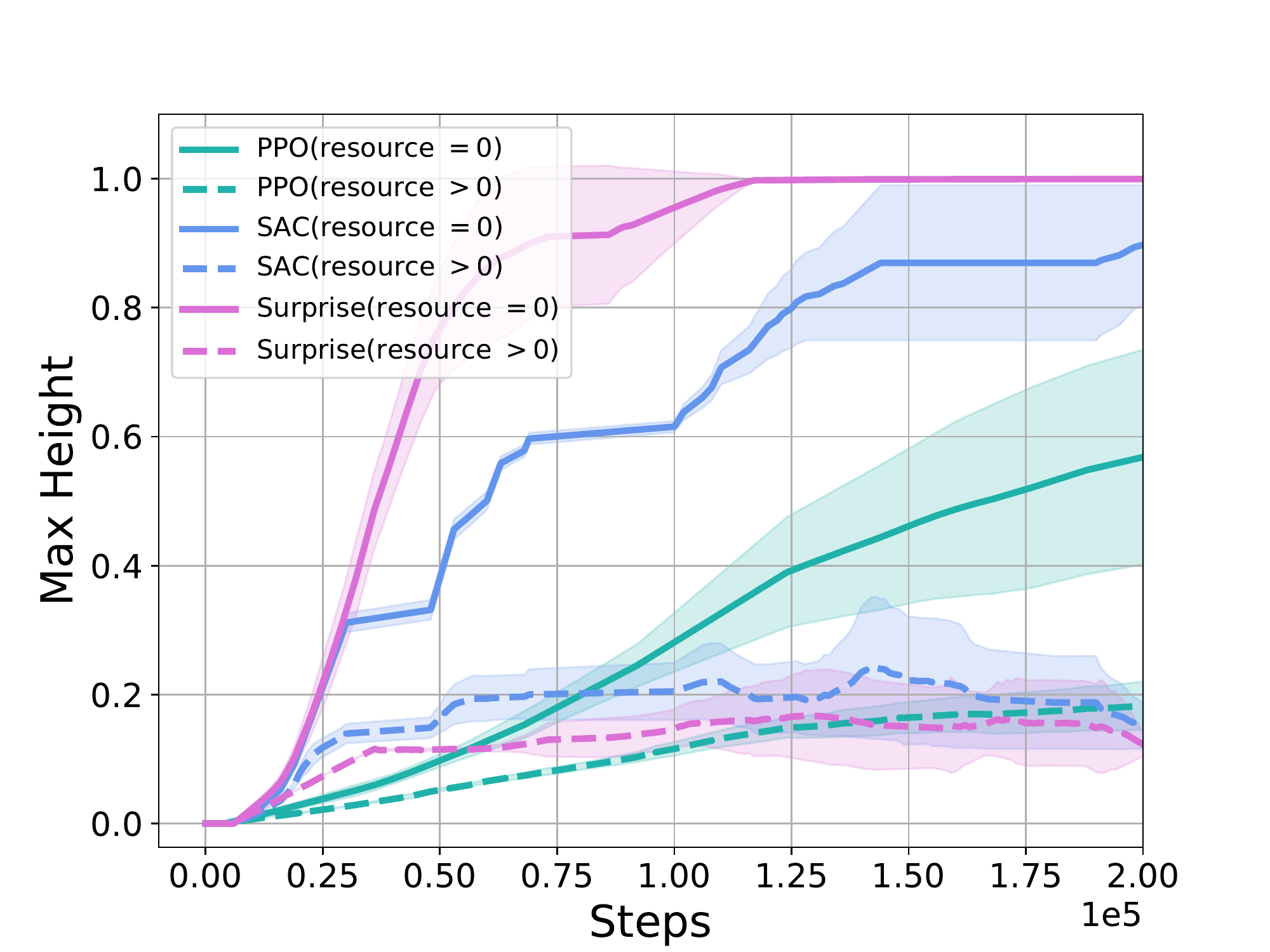}
        \caption{Learning behavior of PPO, SAC, and Surprise on Delivery Mountain Car.}
        \label{fig:Exploration behavior on Delivery Mountain Car}   
    \end{subfigure}
    \vspace{-2mm}
    \caption{The solid curves correspond to the mean and the shaded region to the standard deviation over at least four random seeds. For visual clarity, we smooth curves uniformly. (a) The results show that the average return of Surprise on Electric Mountain Car is much lower than that on Continuous Mountain Car. (b) The results show that the average score of Surprise on Delivery Mountain Car increases with the initial number of goods.
    (c) We evaluate the learned policies every 10000 training steps on Delivery Mountain Car. 
    The dashed curves correspond to the max height that the car can reach before exhausting the resources, while the solid curves correspond to the max height that the car can reach regardless of whether the car exhausts the resources or not. The results demonstrate that PPO, SAC, and Surprise exhaust the resources when the car only reaches a low height, and then the subsequent exploration is severely restricted due to the absence of resources.}
    \label{fig:reason_for_inefficient_exploration}
    \vspace{-3mm}
\end{figure*}



We present a detailed formulation of RL with limited resources in this section. We empirically show that the resource-related information is critical for efficient exploration, as the available actions largely depend on the remaining resources (see Section \ref{sec:inefficient_exploration}).  However, the general reinforcement learning formulation assumes the environment is totally unknown and neglects the resource-related information. To tackle this problem, we formalize the tasks with resources as a resource-restricted reinforcement learning, which introduces accessible resource-related information. 

\subsection{A detailed formulation of R3L}

We define certain objects that performing actions requires as resources in reinforcement learning settings, such as the energy in robotic control and consumable items in video games. Suppose that there are $d$ types of resources. We use the notation $\xi \in \mathbb{R}^d_{+}$ to denote the resource vector and $\mathcal{S}_r \subset \mathbb{R}^d_{+}$ to denote the set of all possible resource vectors. 

We augment the state space $\mathcal{S}$ with $\mathcal{S}_r$. That is, the state space in R3L is $\mathcal{S}_{\text{r3l}} = \{s|s=\left[s_o,s_r\right], \forall s_o\in \mathcal{S},\forall s_r \in \mathcal{S}_r\}$, where $\left[s_o, s_r\right]$ denotes the concatenation of $s_o$ and $s_r$. With this augmented state space, algorithms can learn resource-related information from data. 

To better exploit the resource-related information, we define a resource-aware function $I:\mathcal{S}\to \mathbb{R}^d_{+}$ from the state to the quantity of its available resources,

\vspace{-0.4cm}
\begin{align*}
 I(s) = (I_1(s), \dots, I_d(s)),\, I_i(s) \geq 0,\,\forall i \in [d],
\end{align*}
where $[d] = \{1,2,\dots, d\}$. We call the $i$-th type of resources non-replenishable if the quantity is monotonically nonincreasing in an episode, i.e., $I_i(s_t) \geq I_i(s_{t+1})$. We assume $I$ is known as a priori because the information about the current resources is easily accessible in many real-world tasks. 

For completeness, we further define a deterministic resource transition function $f: \mathcal{S}_{\text{r3l}} \times \mathcal{A} \to \mathcal{S}_r$ which represents the transition function of the resources. We assume $f$ is unknown as the dynamic of the resources is often inaccessible in real-world tasks. Thus, the transition probability distribution is defined by $p_{r3l}(s^{\prime}|s,a) = p(s_o^{\prime}|s_o,a) \mathbb{I}(s_r^{\prime} = f(s,a))$, where $s=\left[s_o,s_r\right]$, $s^{\prime} = \left[s_o^{\prime}, s_r^{\prime}\right]$, and $\mathbb{I}(\cdot)$ denotes the indicator function. For simplicity, we also denote by $p$ the transition probability distribution in R3L problems. 

We call the RL problems with limited resources as R3L problems, whose formulation is denoted by a tuple $(\mathcal{S}_{\text{r3l}}, \mathcal{A}, p, r, \gamma, I, f)$. Let $\mathcal{A}(s)$ denote the available action set of a given state $s$, and we have $\cup_{s\in \mathcal{S}_{\text{r3l}}}\mathcal{A}(s)=\mathcal{A}$. Note that the $\mathcal{A}(s)$ depends on the remaining resources at state $s$. The feasible policy $\pi(\cdot|s)$ is a PDF over the action space $\mathcal{A}(s)$. We also denote by $\Pi$ the feasible policy set in R3L problems. The same as conventional RL, R3L also aims to solve the problem (\ref{eqn:objective}) to find the optimal feasible policy.

Moreover, we call the state $s_2$ is accessible from the state $s_1$, if there is a stationary policy $\pi \in \Pi$ and a time step $t\in\mathbb{N}_+$, such that the probability density $p^\pi_t(s_2| s_1) >0$. Suppose the resources are non-replenishable and $\exists i \in [d], I_i(s_1) < I_i(s_2)$, then $s_2$ is inaccessible from $s_1$.  Besides, we define the set of accessible states of a state $s$ as 

\vspace{-0.3cm}
$$A_c(s) \dot= \{s^\prime:s^\prime  \text{ is accessible from } s\}.$$

\section{Challenge in R3L Tasks}\label{sec:inefficient_exploration}

In general, we divide real-world decision-making tasks with non-replenishable resources into two categories. In the first category of tasks, all actions consume resources and different actions consume different quantities of resources, such as robotic control with limited energy. In these tasks, the agent needs to seek actions that achieve high rewards while consuming small quantities of resources. In the second category, only specific actions consume resources, such as video games with consumable items. In these tasks, the agent needs to seek proper states to consume the resources. 

To evaluate popular RL methods in both kinds of R3L tasks, we design three series of environments with limited resources based on Gym \cite{gym} and Mujoco \cite{mujoco}. The first is the autonomous electric robot task. In this task, the resource is electricity and all actions consume electricity. The quantity of consumed electricity depends on the amplitude of the action $a$, defined by $0.1*\|a\|_2^2$. 
The second is the robotic delivery task. In this task, the resource is goods and only the ``unload'' action consumes the goods. The agent needs to ``unload'' the goods at an unknown destination. The third is a task that combines the first and the second. Please refer to Appendix B.1 for more details about these environments. Specifically, in this part, we use two designed R3L tasks, namely Electric Mountain Car and Delivery Mountain Car. The two tasks are based on the classic control environment---Continuous Mountain Car. On Electric Mountain Car, the agent aims to reach the mountain top with limited electricity. On Delivery Mountain Car, the agent aims to deliver the goods to the mountain top. 

\subsection{Inefficient exploration in R3L tasks}\label{section4:continuous environments}


%
Based on the designed R3L tasks, we first show that a state-of-the-art exploration method, i.e., the surprise-based exploration method (Surprise) \cite{Achiam17}, suffers from poor sample efficiency in both kinds of R3L tasks, especially those with scarce resources. We compare the performance of Surprise on Electric Mountain Car and Continuous Mountain Car (without electricity restriction) as shown in Figure \ref{fig:car_with_or_without_fuel}. The results show that the performance of Surprise on Electric Mountain Car is much poorer than that on Continuous Mountain Car. We further compare the performance of Surprise on Delivery Mountain Car with different initial quantities of goods. Figure \ref{fig:Surprise_car_decreasing_resource} shows that the performance of Surprise improves with the initial quantity of goods. In particular, Surprise struggles to learn a policy better than random agents when resources are scarce. 




To provide further insight into the challenge in R3L tasks, we then analyze the learning behavior of several popular RL methods, i.e., PPO, SAC, and Surprise, on Delivery Mountain Car.  Figure \ref{fig:Exploration behavior on Delivery Mountain Car} shows their learning behavior for the first $2\times 10^5$ time steps. Figure \ref{fig:Exploration behavior on Delivery Mountain Car} shows that the max height---which the car can reach during an episode regardless of whether the car exhausts the resources or not---is much higher than the max height that the car can reach before exhausting the resources. 
That is, these methods exhaust the resources when the car only reaches a low height, and then the subsequent exploration of the ``unload'' action is severely restricted due to the absence of resources. Therefore, these methods have difficulty in achieving the goal of delivering the goods to the mountain top, though they can reach the mountain top. Overall, these methods suffer from inefficient exploration in R3L problems.

\section{Methods}
In this section, we focus on promoting efficient exploration in R3L problems. We first present a detailed description of our proposed \textbf{r}esource-\textbf{a}ware \textbf{e}xploration \textbf{b}onus (RAEB) and then theoretically analyze the efficiency of RAEB in the finite horizon tabular setting. We summarize the procedure of RAEB in Algorithm \ref{alg:my_algorithm}.


\subsection{Resource-Aware Exploration Bonus}\label{sec:raeb}

As shown in Section \ref{sec:inefficient_exploration}, resource plays an important role in efficient exploration in R3L problems. We observe that the size of accessible state sets of a given state generally positively correlates with its available resources in R3L problems with non-replenishable resources, as states with high resources are inaccessible from states with low resources. Previous empowerment-based exploration methods \cite{empowerment, empowerment_gain} have shown that exploring states with large accessible state sets is essential for efficient exploration. That is, moving towards states with high resources enables the agent to reach a large number of future states, thus exploring the environment efficiently. However, many existing exploration methods such as Surprise neglect the resource-related information and exhaust resources fast. Thus, they quickly reach states with small accessible state sets and the subsequent exploration is severely restricted. 

\begin{algorithm}[tbp]
    \caption{Resource-Aware Exploration Bonus}
    \label{alg:my_algorithm}
    \begin{algorithmic}[1]
        \STATE {\bfseries Input:} 
        Require the parameter $\beta>0$
        \FOR{$k = 0, 1, 2, \dots$}
        \STATE Take action $a_k$ from the state $s_k$ with the policy $\pi$
        \STATE Receive reward $r_k$ and transit to $s_{k+1}$
        \STATE Calculate the exploration bonus $b(s_k,a_k)$ and the resource-aware coefficient  $g(I(s_{k}))$
        \STATE $r\gets r_k + \beta g(I(s_{k}))b(s_k,a_k) $
        \STATE Update the policy $\pi$ using the reward $r $ 
        \ENDFOR
    \end{algorithmic}
\end{algorithm}

To promote efficient exploration in R3L problems, we propose a novel \textbf{r}esource-\textbf{a}ware \textbf{e}xploration \textbf{b}onus (RAEB), which encourages the agent to explore novel states with large accessible state sets by promoting resource-saving exploration. Specifically, RAEB has the form of

\vspace{-0.4cm}
\begin{align}
\label{eqn:RAEB}
r_i(s,a) = g(I(s)) b(s,a),
\end{align}
where $g:\mathbb{R}^d_{+}\to \mathbb{R}_{+}$ is a non-negative function, namely a resource-aware coefficient, and $b(s,a)$ is the exploration bonus to measure the ``novelty'' of a given state. Intuitively, RAEB encourages the agent to visit unseen states with large accessible state sets. Instantiating RAEB amounts to specifying two design decisions: (1) the measure of ``novelty'', (2) and the resource-aware coefficient. 

\textbf{The measure of novelty}
Surprise-based intrinsic motivation mechanism (Surprise) \cite{Achiam17} has achieved promising performance in hard exploration high-dimensional continuous control tasks, such as SwimmerGather. In this paper, we use surprise---the agent's surprise about its experiences---to measure the novelty of a given state to encourage the agent to explore rarely seen states. We use the KL-divergence of the true transition probability distribution from a transition model which the agent learns concurrently with the policy as the exploration bonus. Thus, the exploration bonus is of the form

\vspace{-0.4cm}
\begin{align}
\label{eqn:prediction_error}
b(s,a) = D_{KL}(p\parallel p_{\phi})(s,a),
\end{align}
where $p:\mathcal{S}\times \mathcal{A}\times \mathcal{S} \to [0, \infty) $ is the true transition probability distribution and $p_\phi$ is the learned transition model. Moreover, researchers \cite{Achiam17} proposed an approximation to the KL-divergence $D_{KL}(p\parallel p_{\phi})(s,a)$,

\vspace{-0.4cm}
\begin{align*}
D_{KL}(p\parallel p_{\phi})(s,a) &= H(p, p_{\phi})(s,a) - H(p)(s,a)\\
&\approx H(p, p_{\phi})(s,a)\\
&= \mathbb{E}_{s^{\prime}\sim p(\cdot|s,a)}[-\log p_{\phi}(s^{\prime}|s,a)]
\end{align*}
\begin{figure*}[t]
    \centering
    \begin{subfigure}{0.82\textwidth}
        \includegraphics[width=\textwidth]{./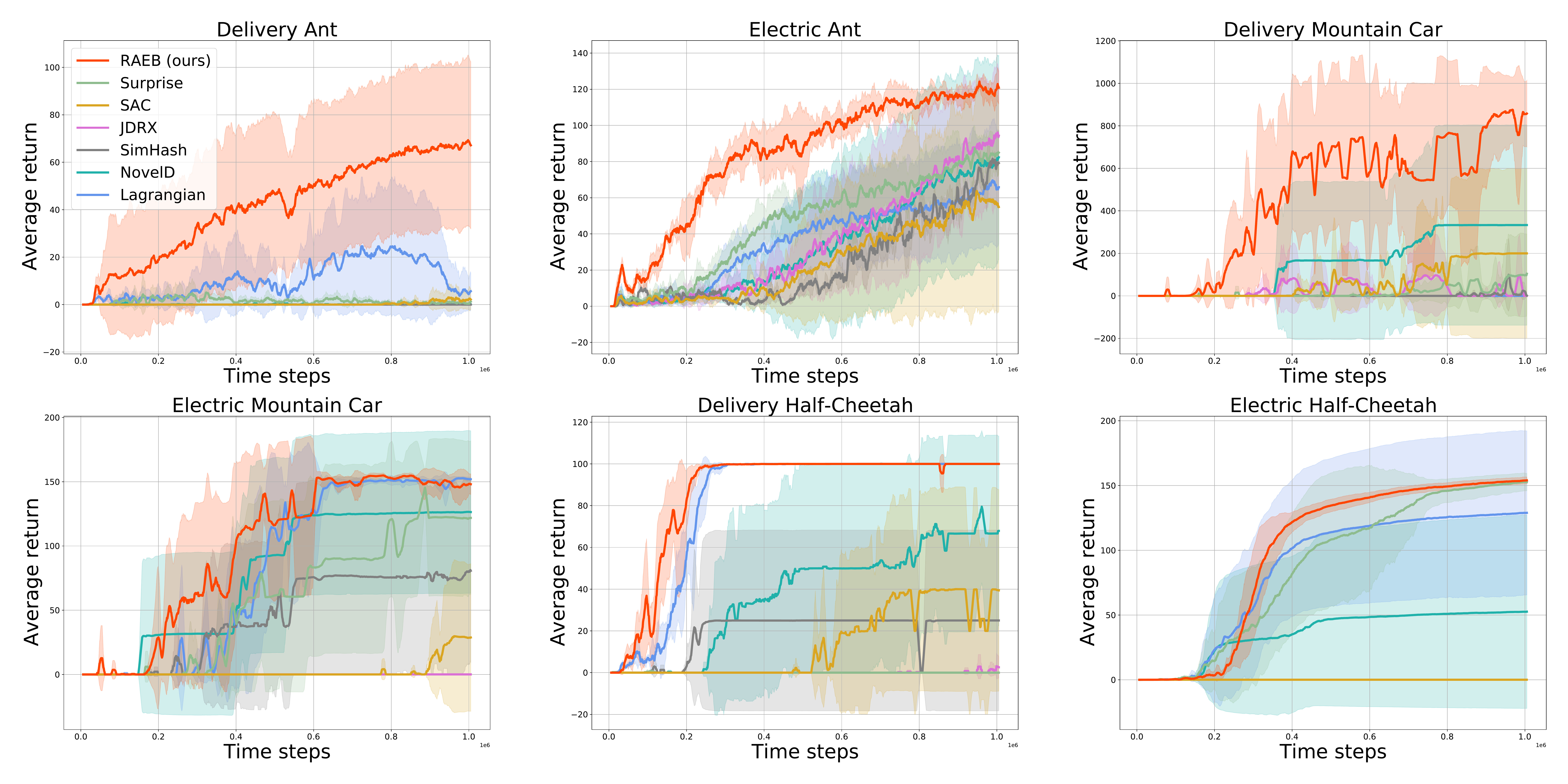}
        
        \vspace{-4mm}
        
        \subcaption{Environments with single resource.}
        \label{fig:evaluation_results}
    \end{subfigure}
    \begin{subfigure}{0.82\textwidth}
        \includegraphics[width=\textwidth]{./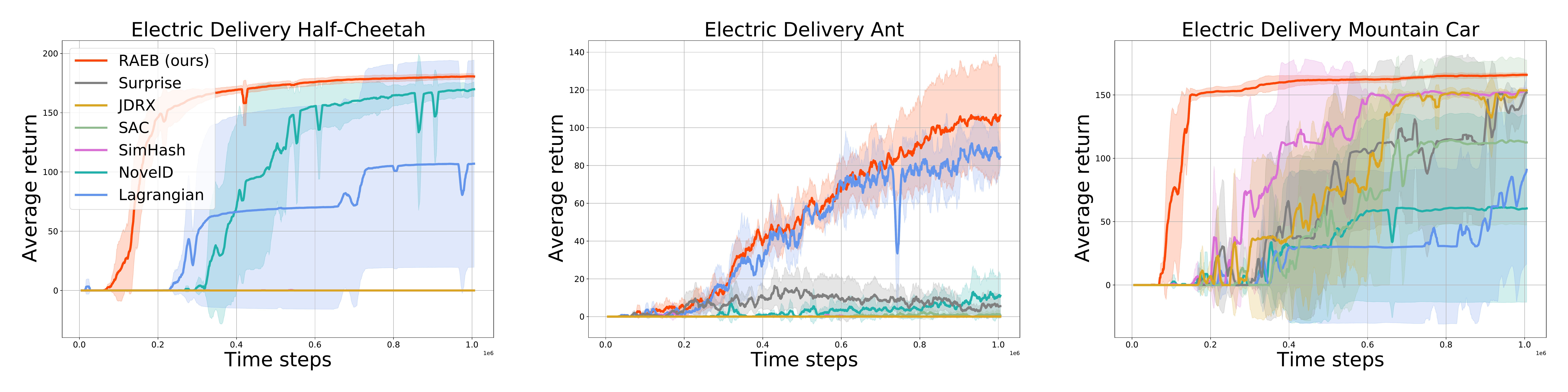}
        
        \vspace{-3mm}
        
        \subcaption{Environments with multiple resources.}
        \label{fig:evaluation_results_multi_resources}
    \end{subfigure}
    \vspace{-2mm}
    \caption{Performance of RAEB and six baselines on nine R3L tasks with limited electricity and/or goods. The solid curves correspond to the mean and the shaded region to the standard deviation over at least five random seeds. For visual clarity, we smooth curves uniformly. The results show that RAEB significantly outperforms these baselines in terms of sample efficiency on several challenging R3L tasks. For Delivery Ant, RAEB improves the sample efficiency by an order of magnitude.}
    \label{fig:evaluation_results_all}
    \vspace{-3mm}
\end{figure*}

\textbf{The resource-aware coefficient}
We define $g:\mathbb{R}^d_{+}\to \mathbb{R}_{+}$ in Eqn.(\ref{eqn:RAEB}) as the resource-aware coefficient. We require the function $g$ being an increasing function of the amount of each type of resources, i.e., for each $i\in [d]$, if two resource vector $\xi,\xi^\prime$ satisfy $\xi_i \leq \xi^\prime_i$ and $\xi_j=\xi^\prime_j$, $j\in [n]\setminus \{i\}$, we have $g(\xi)\leq g(\xi^\prime)$. In this paper, we find that a linear function of the quantity of available resources performs well. 
Suppose the environment has one type of resource, then we define the resource-aware coefficient by

\vspace{-0.4cm}
\begin{align}
\label{eqn:single-resource-aware-coefficient}
g\left(I(s)\right) = \frac{I(s) + \alpha}{I_{\max} +\alpha},
\end{align}
where $\alpha>0$ is a hyperparameter and $I_{\max}$ is the quantity of resources at the beginning of each episode. The hyperparameter $\alpha$ determines the importance of the item $I(s)$ in the resource-aware coefficient and thus controls the degree of resource-saving exploration. 
Similarly, when the environment involves several types of resources and these resources are relatively independent with each other, we can define the resource-aware coefficient by

\vspace{-0.5cm}
\begin{align}
\label{eqn:mul-resource-aware-coefficient}
g\left(I(s)\right) =
\prod_{i=1}^d \frac{I_i(s) + \alpha_i}{I_{\max,i} + \alpha_i}.
\end{align}


\textbf{Discussion about RAEB} We discuss some advantages of RAEB in this part. (1) RAEB makes reasonable usage of resources by promoting novelty-seeking and resource-saving exploration simultaneously. Moreover, RAEB flexibly controls the degree of resource-saving exploration by varying $\alpha$.  
(2) RAEB avoids myopic policies by encouraging the agent to explore novel states with large accessible state sets instead of only exploring novel states.





\textbf{Theoretical results}
We provide some theoretical results of RAEB in this part. 
We restrict the problems to the finite horizon tabular setting and only consider building upon UCB-H \cite{q-learning-with-ucb}.
Researchers \cite{q-learning-with-ucb} 
have shown that $Q$-learning with
UCB-Hoeffding is provably efficient and achieves regret $\widetilde{\mathcal{O}}(\sqrt{H^4 SAT})$. We view an instantiation of RAEB in the tabular setting as Q-learning with a weighted UCB bonus that multiplies the bounded resource-aware coefficient $\beta(s,a)$ and the UCB exploration bonus.
We show that RAEB in the finite horizon tabular setting has a strong theoretical foundation and establishes $\sqrt{T}$ regret.

\begin{theorem}
There exists an absolute constant $c>0$ such that, for any $p\in (0,1)$, choosing $b_t = c\sqrt{H^3\iota /t}$, if there exists a positive real $d$, such that the weights $\beta(s,a) \in [1, d]$ for all state-action pairs $(s,a)$, then with probability $1-p$, the total regret of Q-learning with the weighted UCB bonus (RAEB) establishes $\sqrt{T}$ regret where $\iota :=\log(SAT/p)$.
\label{theorem:convergence}
\end{theorem}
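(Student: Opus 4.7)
The plan is to adapt the UCB-Hoeffding analysis of tabular Q-learning of Jin et al., exploiting the two-sided bound $\beta(s,a) \in [1, d]$ at different points of the argument: the lower bound $\beta(s,a) \geq 1$ is exactly what preserves the optimism property that drives the regret bound, while the upper bound $\beta(s,a) \leq d$ enters the final sum only as a multiplicative constant. Concretely, RAEB in the tabular setting performs the incremental update
\begin{equation*}
Q_h(s,a) \leftarrow (1-\alpha_t) Q_h(s,a) + \alpha_t \bigl[r_h(s,a) + V_{h+1}(s') + \beta(s,a)\, b_t\bigr],
\end{equation*}
where $b_t = c\sqrt{H^3 \iota / t}$ and $t$ indexes the visits to $(s,a)$ at step $h$. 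The key structural observation is that $\beta(s,a)$ depends only on the cell $(s,a)$ and is constant across all visits to that cell, so it factors out of every per-cell weighted sum that appears in the original proof.

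First I would re-establish optimism: with probability at least $1-p$, $Q_h^k(s,a) \geq Q_h^*(s,a)$ for every $(s,a,h,k)$. The induction in Jin et al.\ requires only that the bonus added after $t$ visits dominate a Hoeffding-type fluctuation of magnitude $c\sqrt{H^3\iota/t}=b_t$; since $\beta(s,a)\,b_t \geq b_t$, the inductive step carries through verbatim. Next, under optimism I would decompose $(V_1^* - V_1^{\pi_k})(s_1^k) \leq (V_1^k - V_1^{\pi_k})(s_1^k)$ and unfold the recursion for $(Q_h^k - Q_h^{\pi_k})(s_h^k, a_h^k)$ along the trajectory. Summing over episodes produces the three familiar terms: a cross-episode telescoping term in $V_{h+1}^{k} - V_{h+1}^{\pi_k}$, a martingale-difference term handled by Azuma--Hoeffding, and a bonus sum $\sum_{k,h}\beta(s_h^k,a_h^k)\, b_{n_h^k}$. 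Applying the upper bound $\beta(s,a) \leq d$, together with the standard learning-rate identity $\sum_{i=1}^t \alpha_t^i \leq 1 + 1/H$ and the Cauchy--Schwarz pigeonhole bound $\sum_{s,a,h}\sqrt{N_h^K(s,a)} \leq \sqrt{SAHK}$, this sum is at most $d$ times its UCB-H counterpart.

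Combining these ingredients reproduces the Jin et al.\ bound $\widetilde{\mathcal{O}}\bigl(\sqrt{H^4 SAT}\bigr)$ scaled by the constant $d$, giving total regret $\widetilde{\mathcal{O}}\bigl(d\sqrt{H^4 SAT}\bigr)$, which is $\sqrt{T}$ up to logarithmic factors since $d$, $H$, $S$, and $A$ are problem-dependent constants. The main obstacle is the bookkeeping inside the recursive unfolding: one must verify that inserting $\beta(s,a)$ does not disrupt the weighted-sum identities used to swap $V_{h+1}^{k_i}$ terms between different episodes that happen to visit the same cell. Because $\beta$ is measurable with respect to $(s,a)$ alone and stays fixed across all visits to a fixed $(s,a,h)$, it can be pulled outside the $\alpha_t^i$-weighted inner sum, after which the remaining algebra is identical to the unweighted analysis, and no new concentration inequality or martingale construction beyond those already used by Jin et al.\ is required.
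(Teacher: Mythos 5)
Your proposal is correct and follows essentially the same route as the paper: both adapt the UCB-Hoeffding analysis of Jin et al., using $\beta(s,a)\ge 1$ to preserve the optimism/lower-bound part of Lemma 4.2 and $\beta(s,a)\le d$ to inflate the accumulated bonus term $w_t=2\sum_i \alpha_t^i\beta_i b_i$ by at most a factor $d$, then bounding the bonus sum by the standard pigeonhole/integral argument to obtain $\widetilde{\mathcal{O}}\bigl(d\sqrt{H^4SAT}\bigr)$, i.e.\ $\sqrt{T}$ regret. The only differences are cosmetic (your $d$ sits outside the square root, which is the more careful accounting, and your intermediate pigeonhole display drops a $\sqrt{H}$ factor that does not affect the stated conclusion).
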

The details of $Q$-learning with the weighted bonus algorithm and the rigorous proof to Theorem \ref{theorem:convergence} are in Appendix A.
As resources are limited in practice, the resource-aware coefficient defined by Eqn. (\ref{eqn:single-resource-aware-coefficient}) is bounded. Theorem \ref{theorem:convergence} shows that RAEB in the tabular setting is provably efficient.

\section{Experiments}

Our experiments have four main goals: (1) Test whether RAEB can significantly outperform state-of-the-art exploration strategies and constrained reinforcement learning methods in R3L tasks. (2) Analyze the effect of each component in RAEB and the sensitivity of RAEB to hyperparameters.
(3) Evaluate RAEB variants to provide further insight into RAEB. (4) Illustrate the exploration of RAEB. 


As mentioned in Section \ref{sec:inefficient_exploration}, we design control tasks with limited electricity and/or goods based on Gym \cite{gym} and Mujoco \cite{mujoco}. The agent in these tasks can be a continuous mountain car, a 2D robot (called a `half-cheetah'), and a quadruped robot (called an `ant'). Then we call these tasks Electric Mountain Car, Electric Half-Cheetah, Electric Ant,  Delivery Mountain Car,  Delivery Half-Cheetah, Delivery Ant, Electric Delivery Mountain Car, Electric Delivery Half-Cheetah, and Electric Delivery Ant, respectively. For all environments, We augment the states with the available resources.
In electric tasks, the agent aims to reach a specific unknown destination with limited electricity. In delivery tasks, the agent aims to deliver the goods to a specific unknown destination. In electric delivery tasks, the agent aims to deliver the goods to a specific unknown destination with limited electricity. Please refer to Appendix B.1 for more details about the environments. 


We find that the Surprise
can efficiently reach the unknown destination in tasks without resource restriction. (Please refer to Appendix C for detailed results.) Therefore, we implement RAEB on top of the Surprise method. Furthermore, we use soft actor critic (SAC) \cite{pmlr-v80-haarnoja18b}, the state-of-the-art off-policy method,
as our base reinforcement learning algorithm. The details of the experimental setup are in Appendix B.2. 
Moreover, we project unavailable actions that agents output onto the available action set (see Appendix D) to ensure any policy trained in the experiments belongs to the feasible policy set $\Pi$. 

\subsection{Evaluation and Comparison Analysis}\label{sec:experiment_evaluation}

\textbf{Baselines}
    We compare RAEB with several exploration strategies commonly used to solve problems with high-dimensional continuous state-action spaces, such as Ant Maze \cite{max}. The baselines include state-of-the-art exploration strategies and a state-of-the-art constrained reinforcement learning method. For exploration strategies, we compare to Surprise \cite{Achiam17}, a state-of-the-art prediction error-based exploration method, NovelD \cite{noveld}, a state-of-the-art random network based exploration method, Jensen-R\'enyi Divergence Reactive Exploration (JDRX) 
    \cite{max}, a state-of-the-art information gain based exploration method, SimHash \cite{simhash}, a state-of-the-art count-based exploration method, and soft actor critic (SAC) \cite{pmlr-v80-haarnoja18b}, a state-of-the-art off-policy algorithm. In terms of constrained reinforcement learning methods, we compare to the Lagrangian method, which uses the consumed resources at each step as a penalty and has achieved strong performance in constrained reinforcement learning tasks \cite{saferl_benchmark}. Notice that we focus on the model-free reinforcement learning settings in this paper.

\noindent \textbf{Evaluation}
    We first compare RAEB to the aforementioned baselines in R3L tasks with a single resource, i.e.,  delivery tasks and tasks with limited electricity. We then compare RAEB to the aforementioned baselines in R3L tasks with multiple resources, i.e., delivery tasks with limited electricity. For all environments, we use the intrinsic reward coefficient $\beta=0.25$. We use $\alpha=0.25 I_{\max}$ for delivery tasks, $\alpha=2.5 I_{\max}$ for tasks with limited electricity, and $\boldsymbol{\alpha}=\left[0.25 I_{\max}, 2.5 I_{\max}\right]$ for delivery tasks with limited electricity. Note that $\alpha$ controls the degree of resource-saving exploration, and we need to adjust $\alpha$ according to the scarcity of resources in practice. 

    First, Figure \ref{fig:evaluation_results} shows that RAEB significantly outperforms the baselines on several challenging R3L tasks with a single resource. For Delivery Ant, we show that the average return of the baselines after $1\times 10^6$ steps is at most $5.65$ while RAEB achieves similar performance ($11.13$) using only $1\times 10^5$ steps. Please refer to Appendix C for detailed results. This result demonstrates that RAEB improves the sample efficiency by an order of magnitude on Delivery Ant. Moreover, some baselines even struggle to attain scores better than random agents on several R3L tasks, which demonstrates that efficient exploration is extremely challenging for some baselines in R3L tasks. Compared to the Lagrangian method, RAEB still achieves outstanding performance on several challenging R3L tasks. The major reason is that, the constrained reinforcement learning method aims to find policies that satisfy resource constraints, and struggles to make reasonable usage of resources in R3L tasks.
    
    Second, Figure \ref{fig:evaluation_results_multi_resources} shows that RAEB significantly outperforms the baselines on several R3L tasks with two types of resources. The results highlight the potential of RAEB for efficient exploration in R3L tasks with multiple resources. RAEB scales well in R3L tasks with multiple resources as the two types of resources are independent, and each resource's resource-aware coefficient encourages the corresponding resource-saving exploration.

\begin{figure}[t]
    \centering
    \includegraphics[width=0.48\columnwidth]{./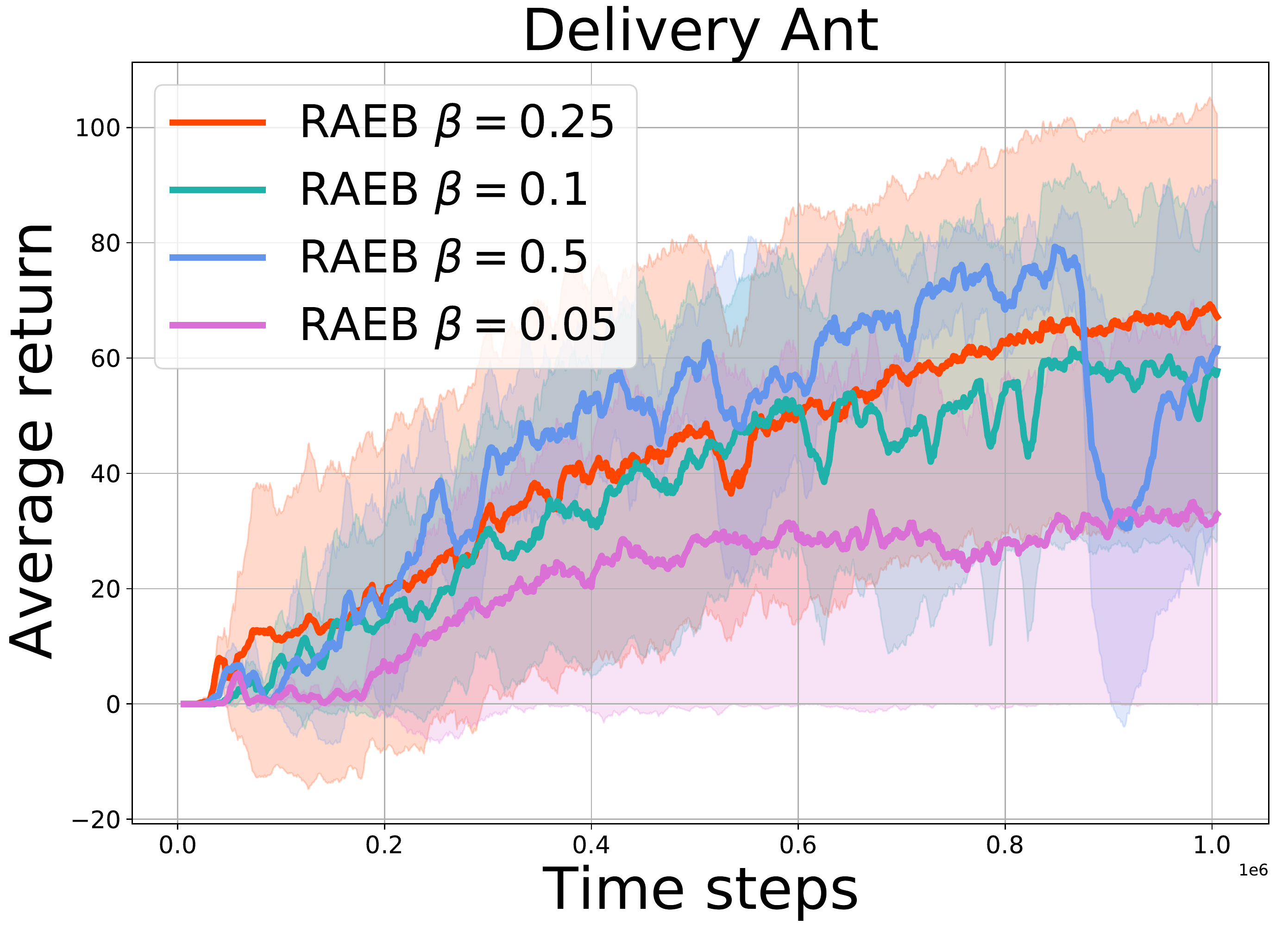}
    \includegraphics[width=0.48\columnwidth]{./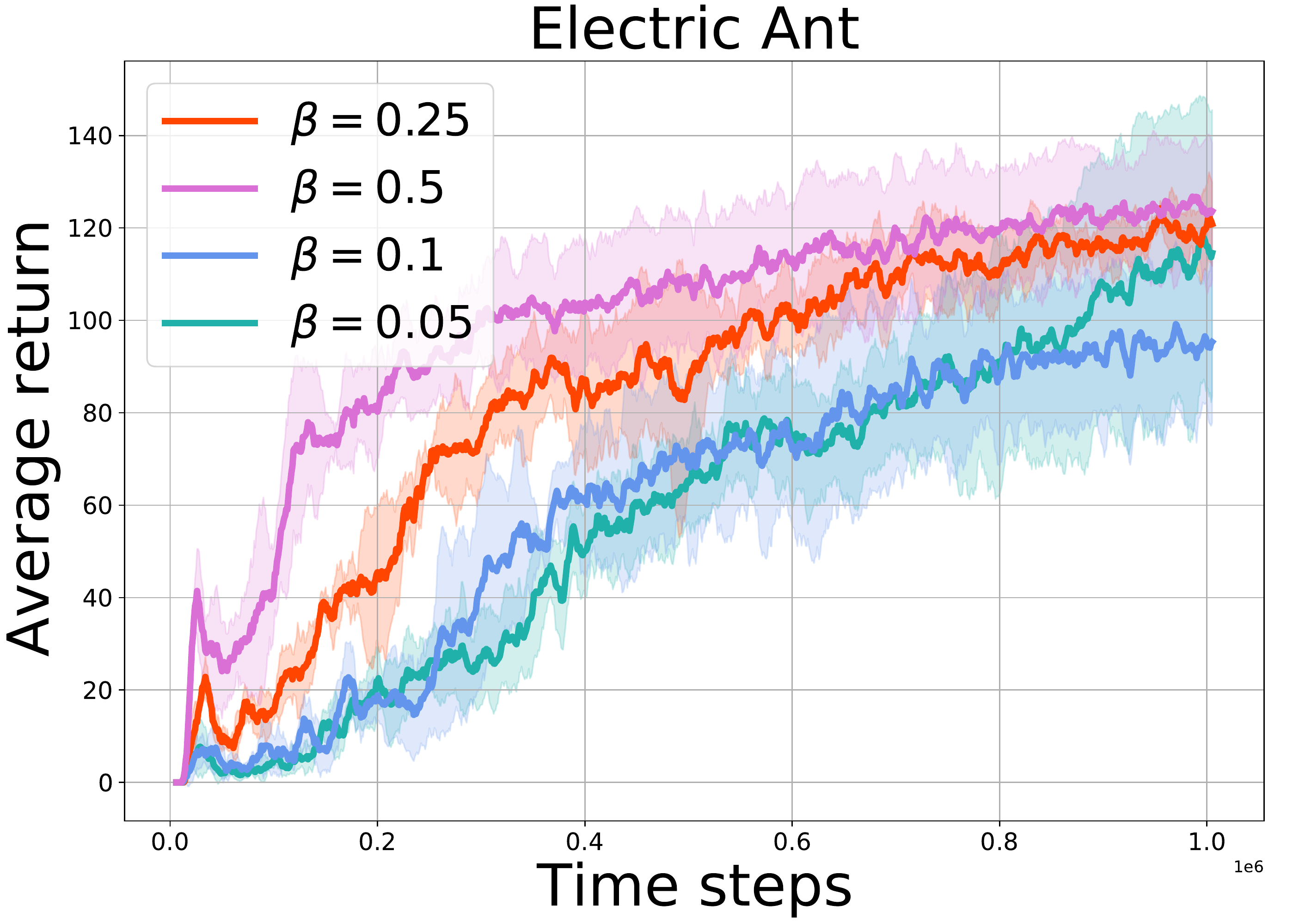}
    \vspace{-2mm}
    \caption{Learning curves on Delivery Ant and Electric Ant. RAEB achieves similar average performance with different reward weighting coefficients $\beta$, which shows that RAEB is insensitive to $\beta$.}
    \label{fig:beta_sensitivity}
    \vspace{-4mm}
\end{figure}

\begin{figure}[t]
    \centering
    \includegraphics[width=0.48\columnwidth]{./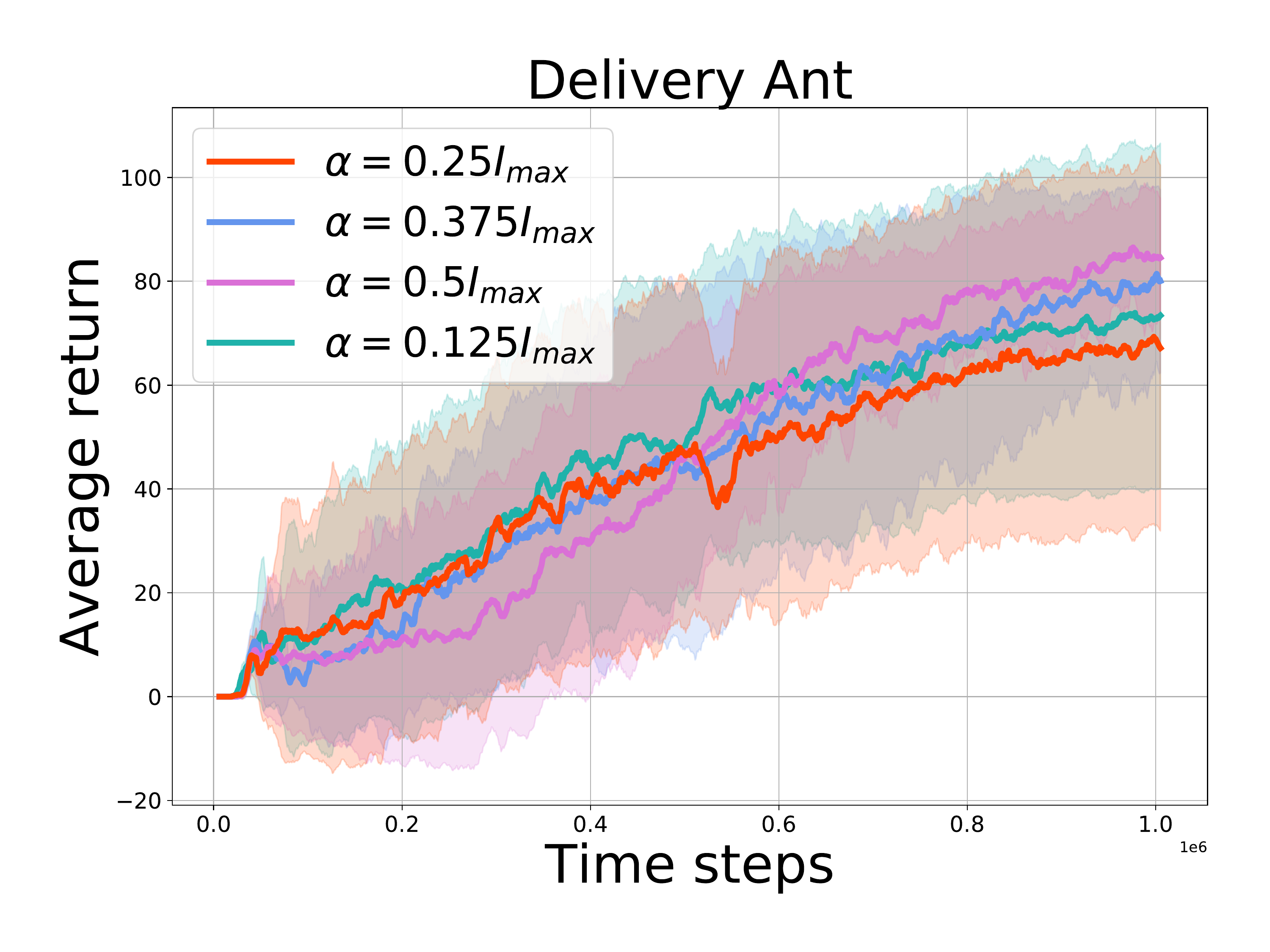}
    \includegraphics[width=0.48\columnwidth]{./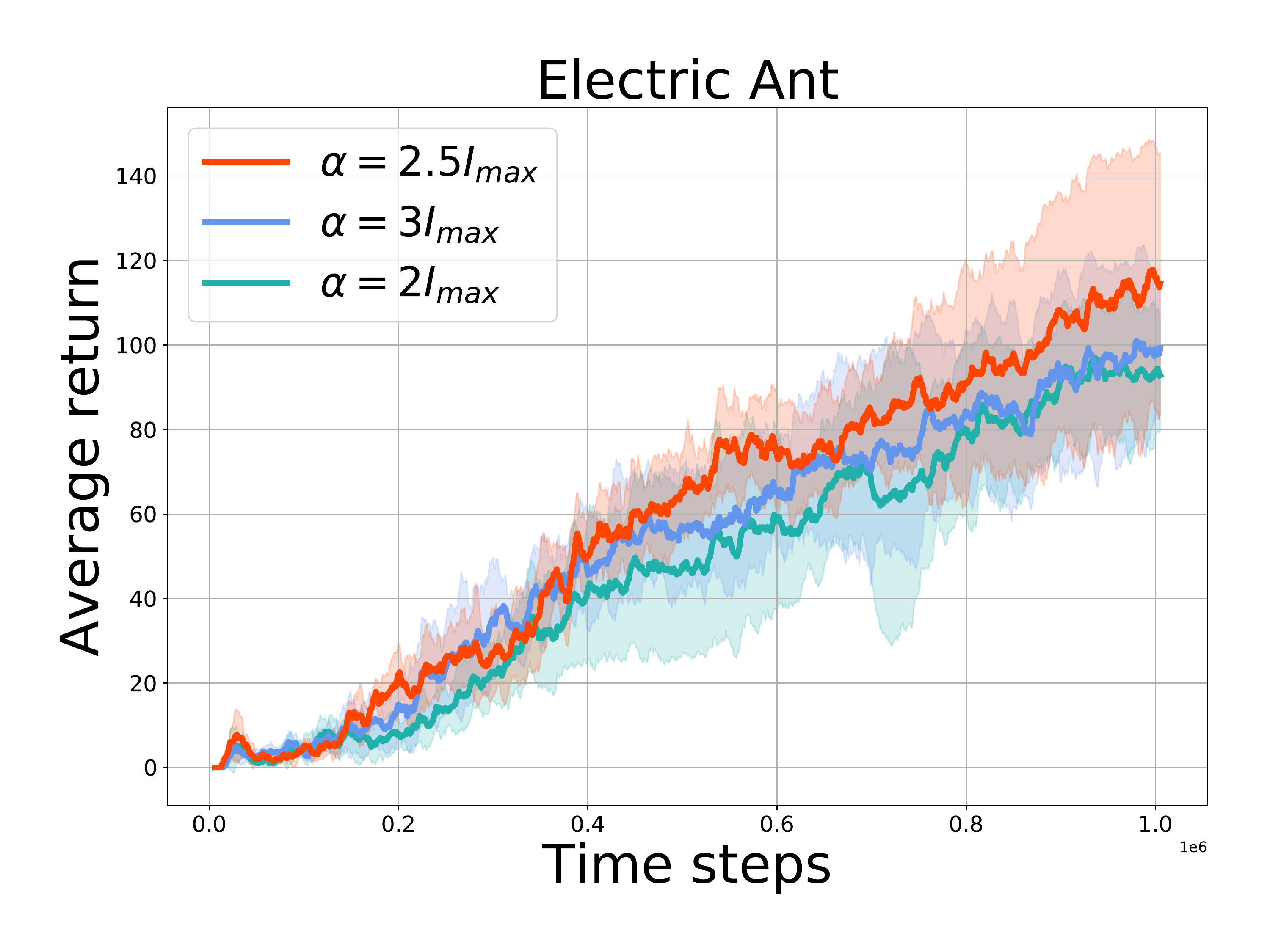}
    \vspace{-2mm}
    \caption{Learning curves on Delivery Ant and Electric Ant. RAEB achieves similar average performance with different $\alpha$, which demonstrates that RAEB is insensitive to $\alpha$.}
    \label{fig:alpha_sensitivity}
\end{figure}

\subsection{Ablation Study}

First, we analyze the sensitivity of RAEB to hyperparameters $\beta$ and $\alpha$. Then, We perform ablation studies to understand the contribution of each individual component in RAEB.
In this section, we conduct experiments on Delivery Ant and Electric Ant, which have high resolution power due to their difficulty. All results are reported over at least four random seeds. For completeness, we provide additional results in Appendix C.  

\noindent \textbf{Sensitivity analysis}
First, we compare the performance of RAEB with different reward weighting coefficients $\beta$. Intuitively, higher $\beta$ values should cause more exploration, while too low of an $\beta$ value reduces RAEB to the base algorithm, i.e., SAC. In the experiments, we set the reward weighting coefficient $\beta=0.5, 0.25, 0.1,$ and $0.05$, respectively. Though choosing a proper reward  weighting coefficient remains an open problem for existing intrinsic-reward-based exploration methods \cite{large_curiosity}, the results in Figure \ref{fig:beta_sensitivity} show that there is a wide $\beta$ range for which RAEB achieves comparable average performance on Delivery Ant and Electric Ant. Then, we compare the performance of RAEB with different $\alpha$ on Delivery Ant and Electric Ant. The results in Figure \ref{fig:alpha_sensitivity} show that
there is a wide $\alpha$ range for which RAEB achieves competitive performance on Delivery Ant and Electric Ant. Overall, the results in Figures \ref{fig:beta_sensitivity} and \ref{fig:alpha_sensitivity} show that RAEB is insensitive to hyperparameters $\beta$ and $\alpha$. 

\noindent\textbf{Contribution of each component}
RAEB contains two components: the measure of novelty and the resource-aware coefficient. We perform ablation studies to understand the contribution of each individual component. Figure \ref{fig:component} shows that the full algorithm RAEB significantly outperforms each ablation with only a single component on Delivery Ant and Electric Ant. On delivery Ant, RAEB without resources and RAEB without surprise even struggle to perform better than a random agent. The results show that each component is significant for efficient exploration in these R3L tasks. 
Though the surprise bonus encourages novelty-seeking exploration, it tends to exhaust resources fast. And only the resource-aware coefficient tends to encourage the agent to save resources without consuming resources, while resource-consuming exploration is significant in R3L tasks.  


\begin{figure}[t]
    \centering
    \includegraphics[width=0.48\columnwidth]{./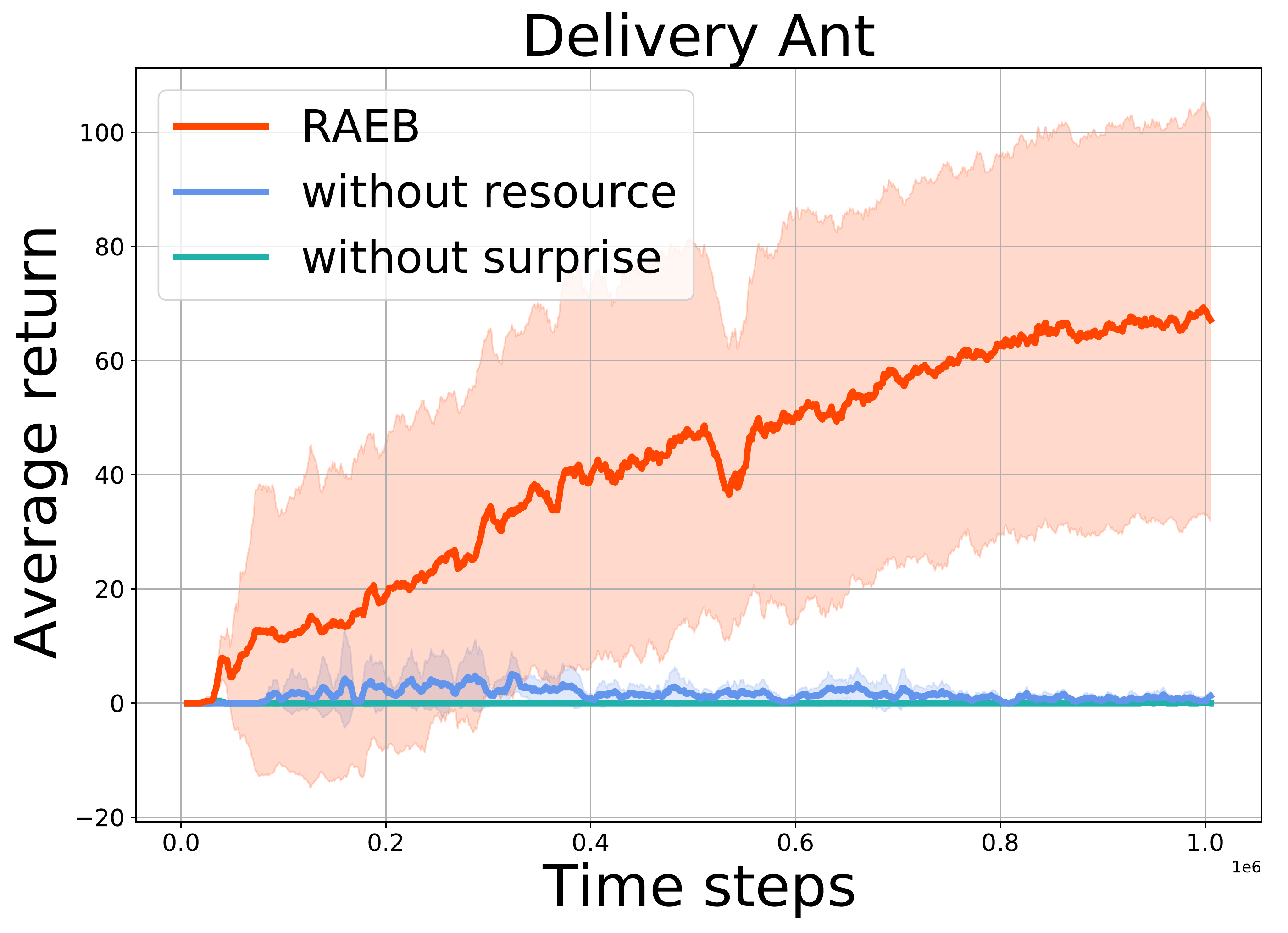}
    \includegraphics[width=0.48\columnwidth]{./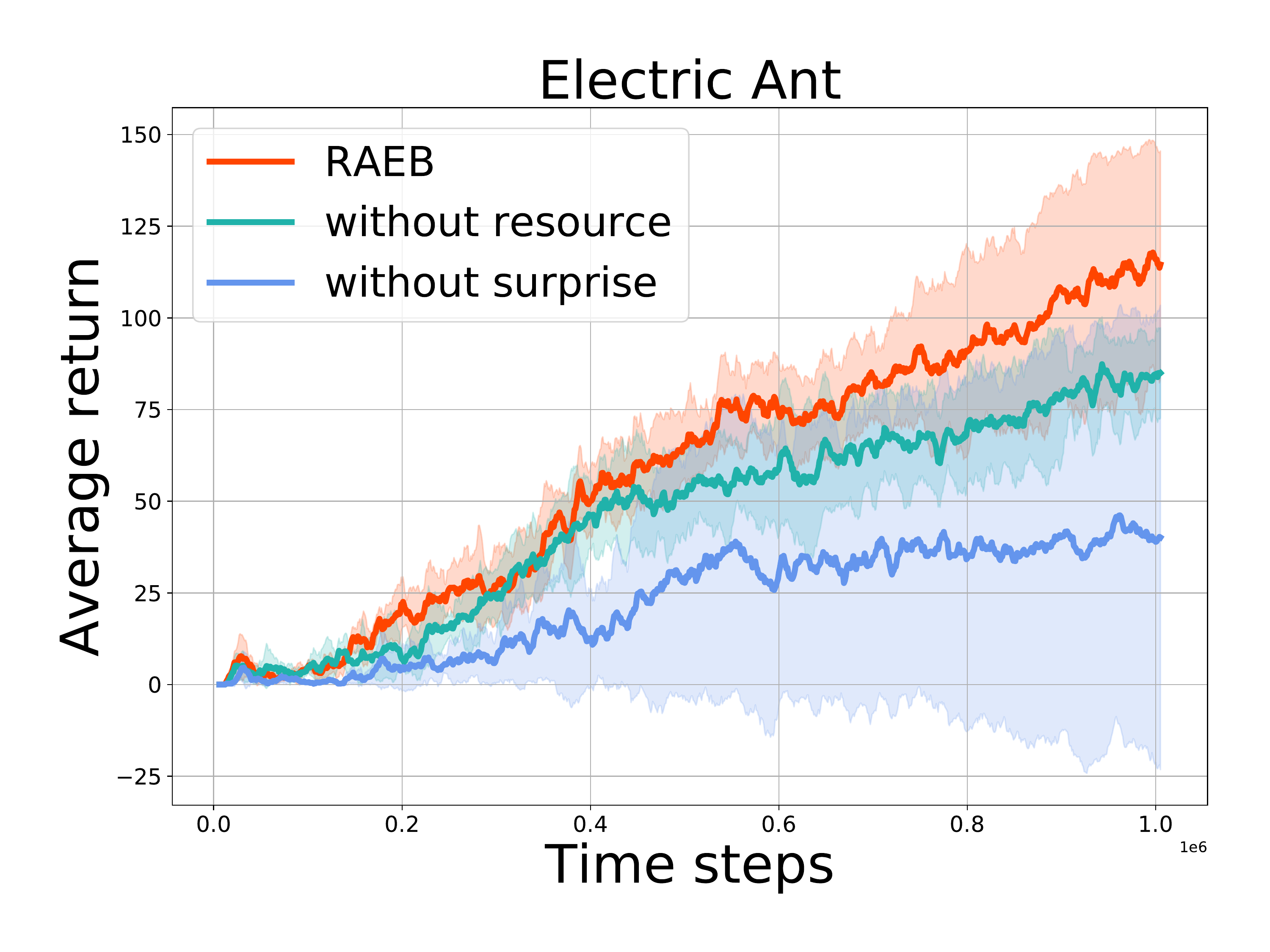}
    \vspace{-2mm}
    \caption{Learning curves on Delivery Ant and Electric Ant. The results show that each component of RAEB is significant for efficient exploration in R3L tasks.}
    \label{fig:component}
    \vspace{-3mm}
\end{figure}

\begin{figure}[t]
    \centering
    \includegraphics[width=0.48\columnwidth]{./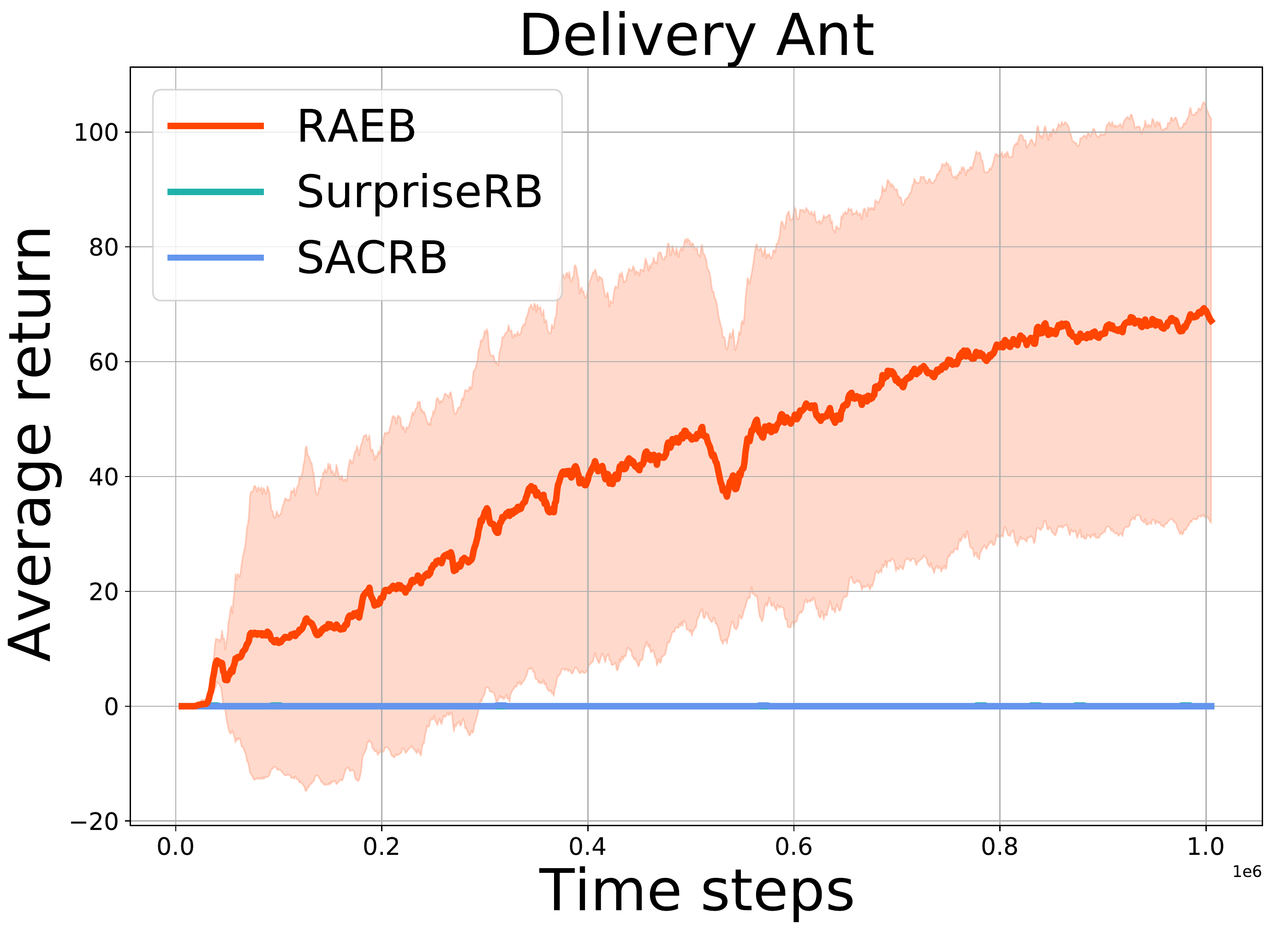}
    \includegraphics[width=0.48\columnwidth]{./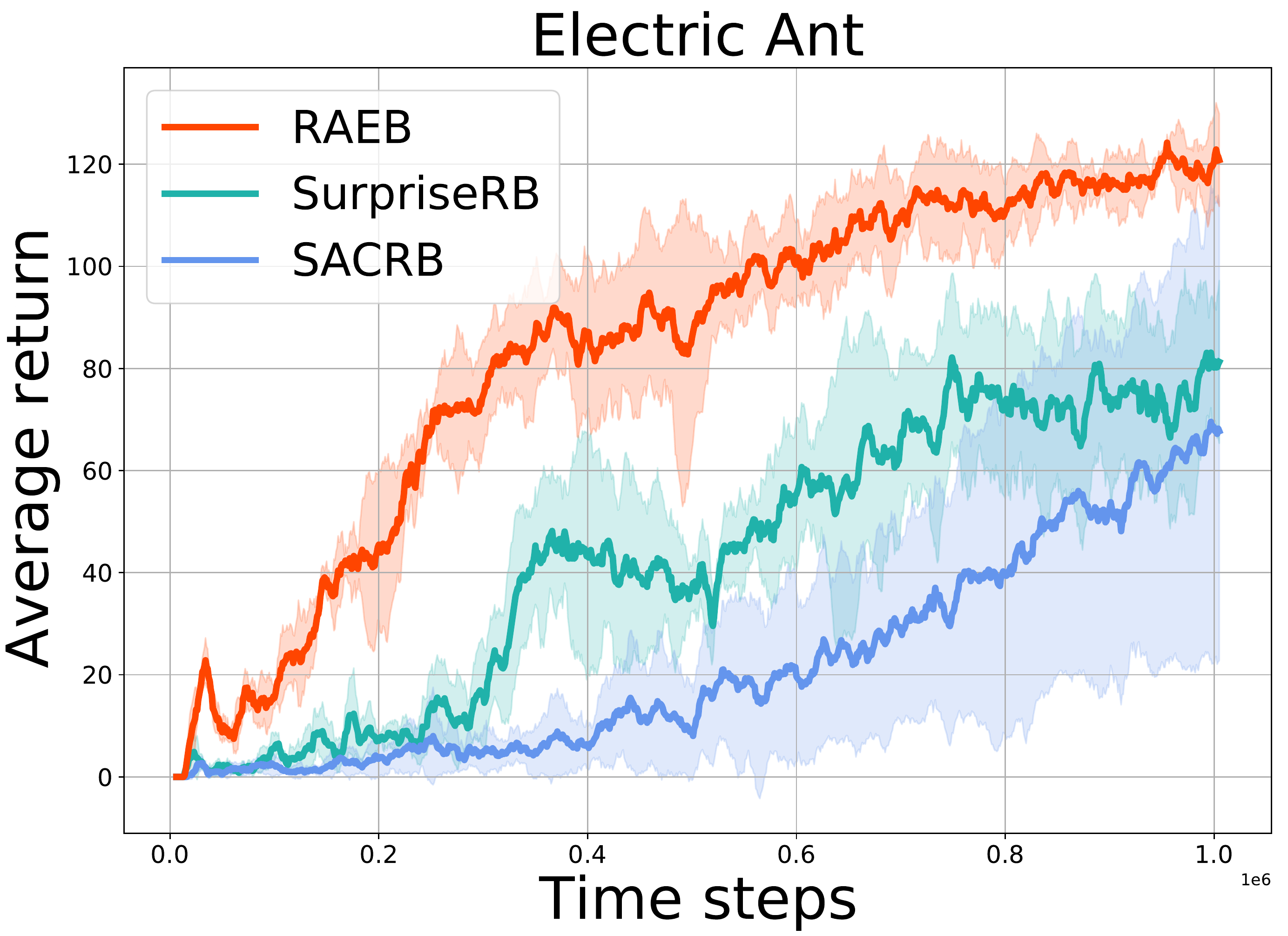}
    \vspace{-3mm}
    \caption{We compare RAEB to two RAEB variants on Delivery Ant and Electric Ant. The results show that these variants struggle to achieve outstanding performance in both environments compared to RAEB.}
    \label{fig:ablation_trivial_methods}
\end{figure}

\begin{figure}[t]
    \centering
    \includegraphics[width=0.48\columnwidth]{./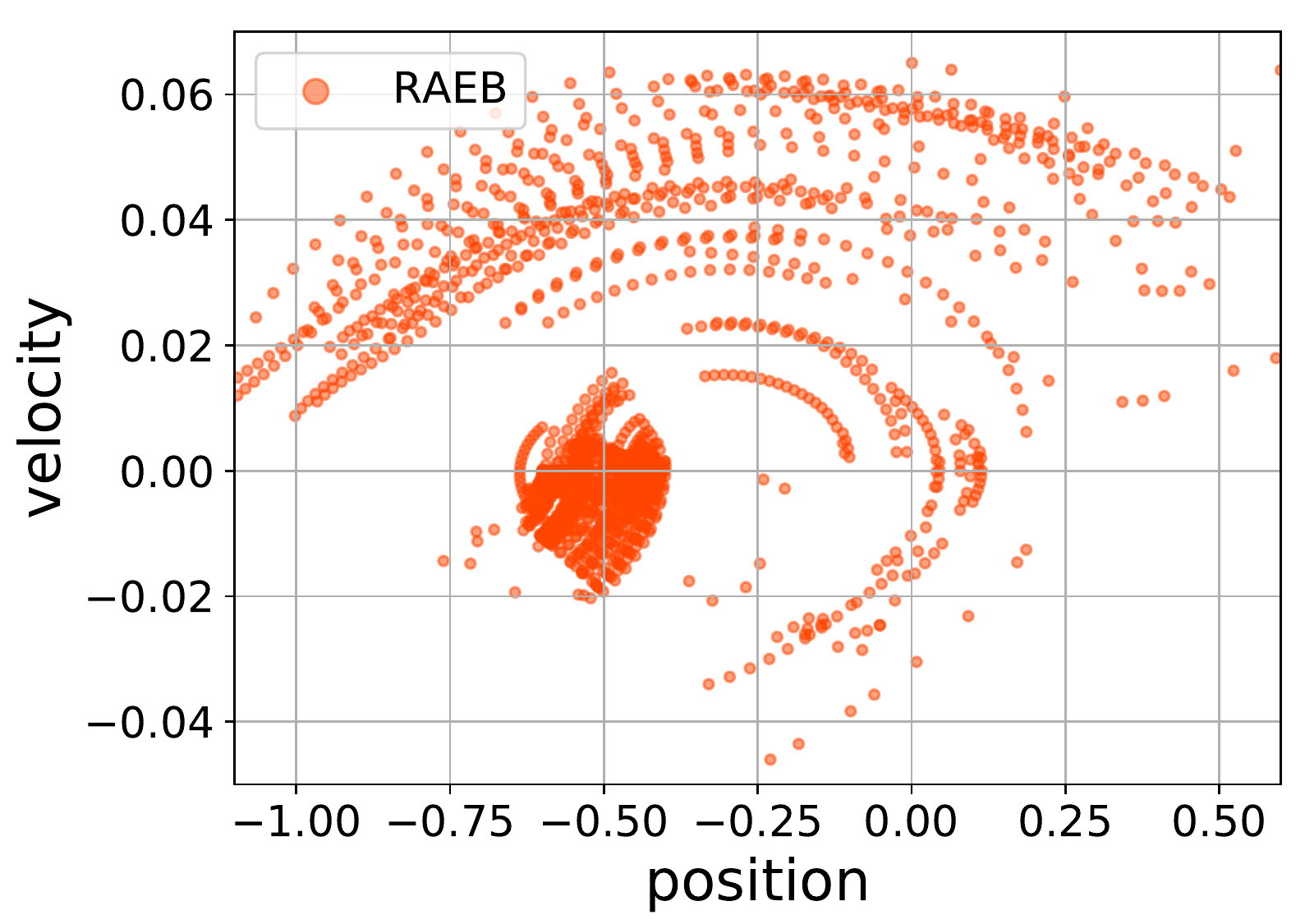}
    \includegraphics[width=0.48\columnwidth]{./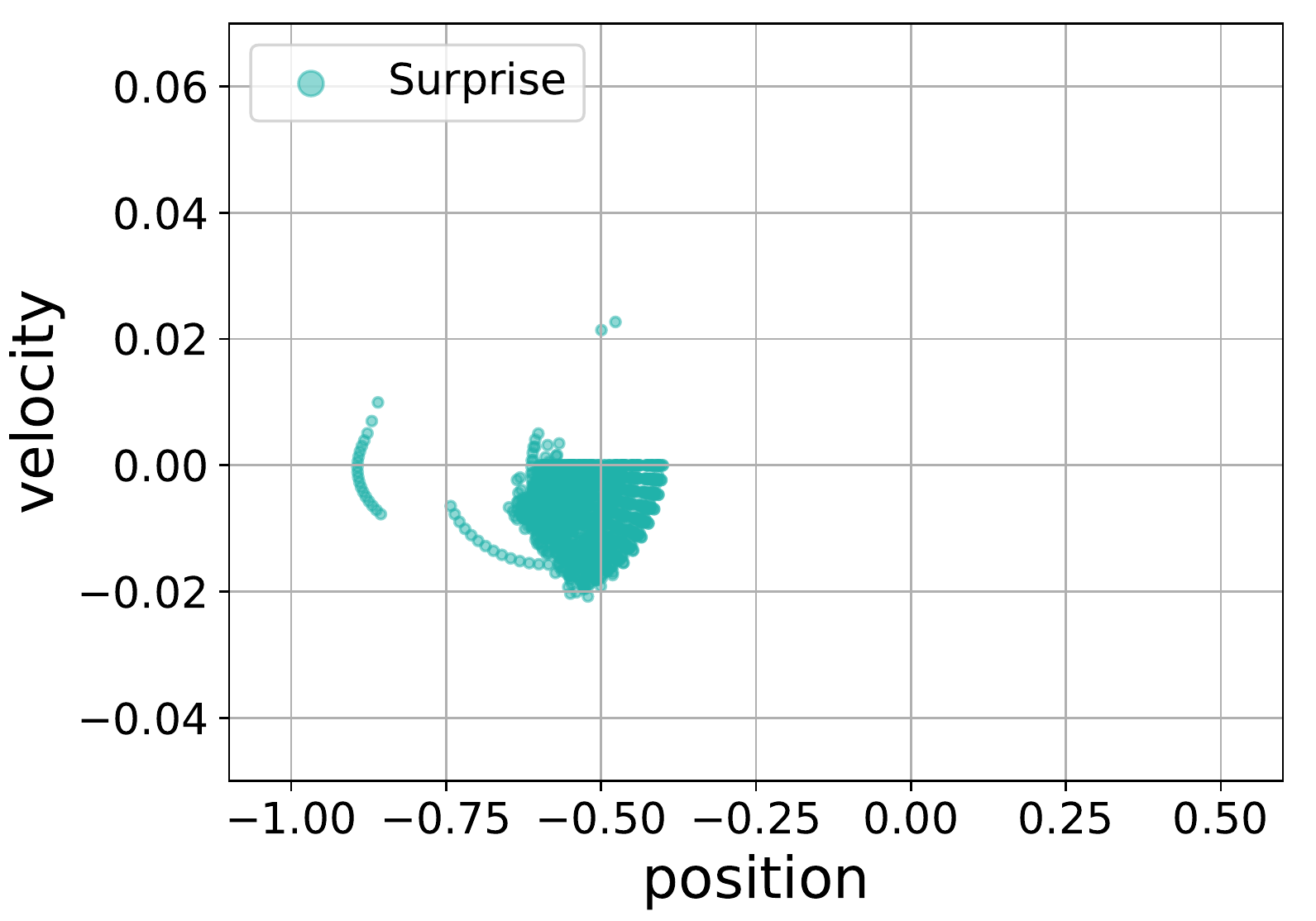}
    \includegraphics[width=0.48\columnwidth]{./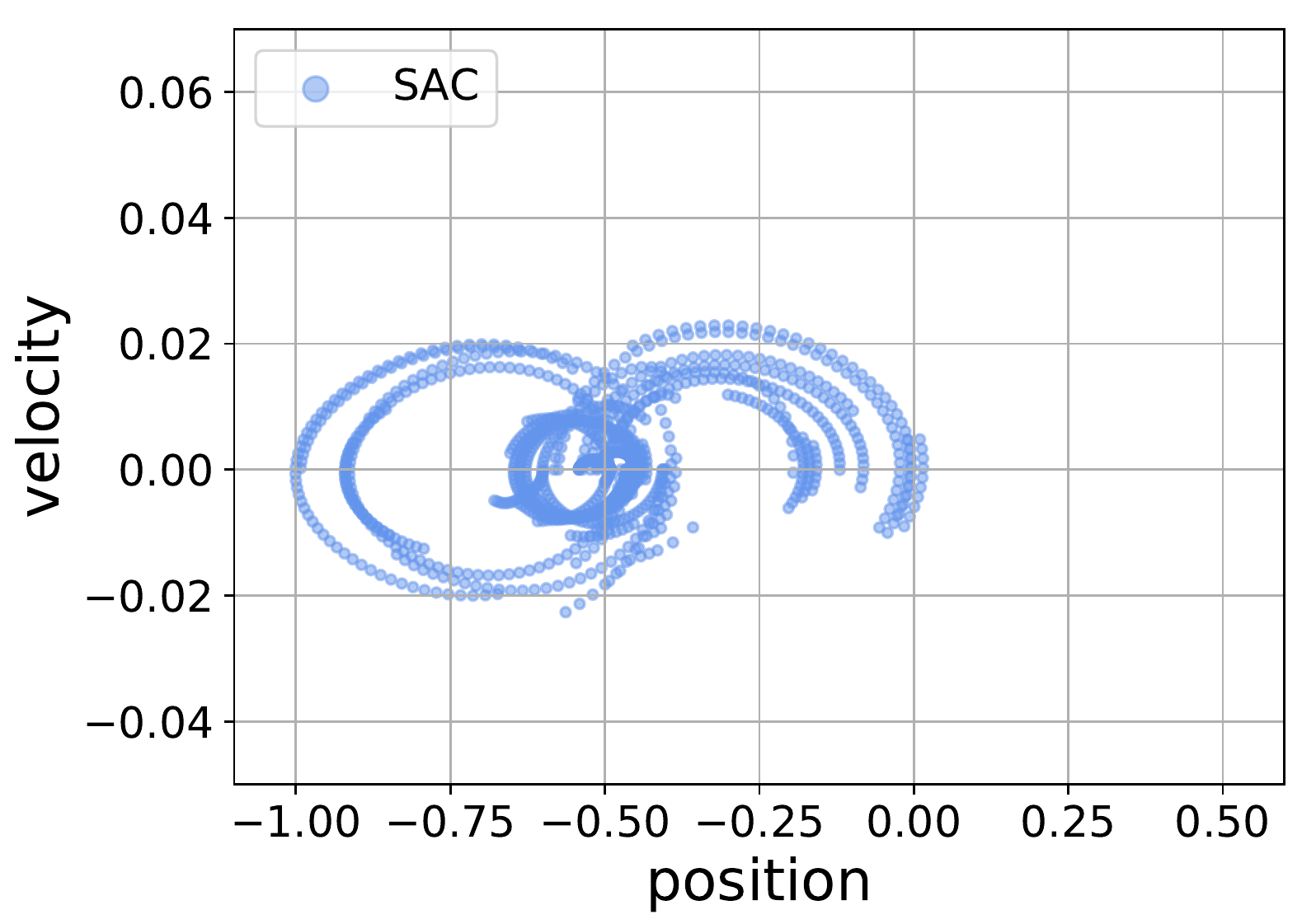}
    \includegraphics[width=0.48\columnwidth]{./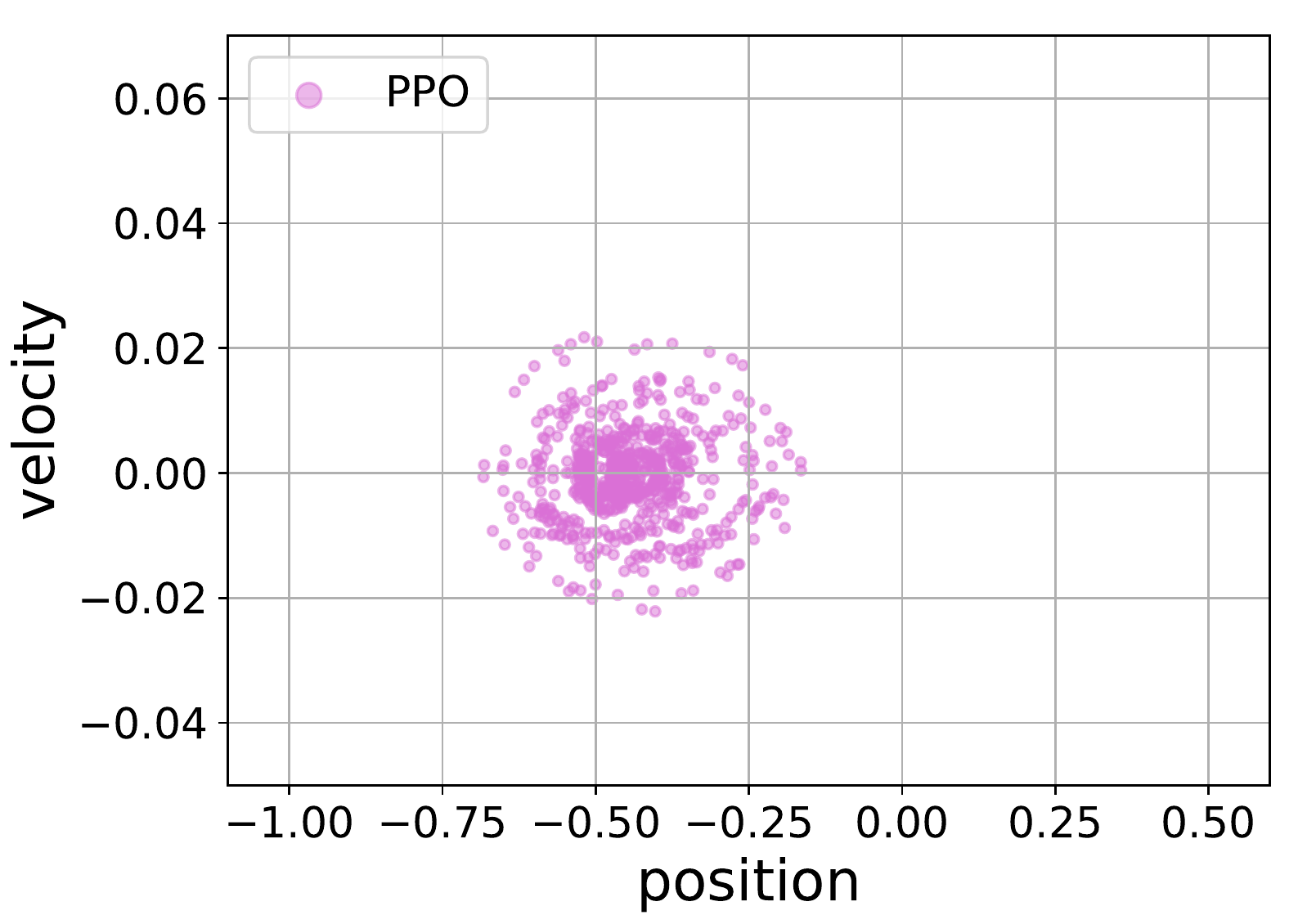}
    \vspace{-2mm}
    \caption{Illustration of RAEB exploration on Delivery Mountain Car. Colored points illustrate the states (position and velocity of the car) where the agent consumes resources in 0.2 million steps.}
    \label{fig:visualization}
\end{figure}

\subsection{Compared to RAEB variants}

Based on the idea of preventing the agent from exhausting resources fast, it is natural to use the quantity of the remaining resources as a reward bonus instead of a resource-aware coefficient. We use the bonus to train SAC and Surprise, called SAC with resources bonus (SACRB) and Surprise with resources bonus (SurpriseRB), respectively. Specifically, SACRB and SurpriseRB add an additional reward bonus $c \frac{I(s)+\alpha}{I_{\max}+\alpha}$ to the reward. We compare RAEB to these variants on Delivery Ant and Electric Ant. Figure \ref{fig:ablation_trivial_methods} demonstrates that RAEB significantly outperforms these variants in both two environments, which shows the superiority of RAEB compared to its variants. For Delivery Ant, the RAEB variants even struggle to perform better than random agents. The major advantage of RAEB over SurpriseRB is that it is able to well combine the exploration ability of the surprise bonus and the resource-aware bonus by using a resource-aware coefficient instead of adding an additional bonus, as the scale between the surprise and the resource-aware bonus can be extremely different. Moreover, SurpriseRB is sensitive to the hyperparameter $c$. (See Appendix C)



\subsection{Illustration of RAEB exploration}

To illustrate the exploration in the Delivery Mountain Car environment, we visualize the states where the agent unloads the goods in Figure \ref{fig:visualization}. Notice that we visualize the position and velocity of the car while ignoring the quantity of remaining resources for conciseness. Figure \ref{fig:visualization} shows that the agents trained by PPO, SAC, and Surprise unload the goods in the same states repeatedly, leading to inefficient exploration. Moreover, Figure \ref{fig:visualization} shows that RAEB significantly reduces unnecessary resource-consuming trials, while effectively encouraging the agent to explore unvisited states. 

\section{Conclusion}

In this paper, we first formalize the decision-making tasks with resources as a resource-restricted reinforcement learning, and then propose a novel resource-aware exploration bonus to make reasonable usage of resources. Specifically, we quantify the RAEB of a given state by both the measure of novelty and the quantity of available resources. We conduct extensive experiments to demonstrate that the proposed RAEB significantly outperforms state-of-the-art exploration strategies on several challenging robotic delivery and autonomous electric robot tasks.

\section{Acknowledgments}
We would like to thank all the anonymous reviewers for their insightful comments. This work was supported in part by National Science Foundations of China grants U19B2026, U19B2044, 61836006, and 62021001, and the Fundamental Research Funds for the Central Universities grant WK3490000004.

\bibliography{aaai23}

\newpage
\newpage

\appendix

\section{Theoretical Analysis}
\subsection{Proof of Theorem \ref{theorem:convergence}}

Our analysis is based on $Q$-learning with UCB-Hoeffding \cite{q-learning-with-ucb}. In this subsection, we first introduce the background and then show the proof of Theorem \ref{theorem:convergence}.

\textbf{Background} We define a tabular episodic Markov decision process by a tuple $(\mathcal{S}, \mathcal{A}, H, \mathbb{P}, \mathrm{r}),$ where $\mathcal{S}$ is the state space with $|\mathcal{S}|=S, \mathcal{A}$ is the action space with $|\mathcal{A}|=A, H$ is the number of steps in each episode, $\mathbb{P}_{h}(\cdot | s, a)$ is the probability distribution over states if the agent selects action $a$ in state $s$ at step $h \in[H],$ and $r_{h}: \mathcal{S} \times \mathcal{A} \rightarrow[0,1]$ is the deterministic reward function at step $h .$ We denote by  $N_h(s,a)$ the visiting count of the state-action pair $(s,a)$ at step $h$. The Bellman equation and the Bellman optimality equation are
\begin{align}
\left\{\begin{array}{l}
V_{h}^{\pi}(s)=Q_{h}^{\pi}\left(s, \pi_{h}(s)\right) \\
Q_{h}^{\pi}(s, a)=\left(r_{h}+\mathbb{P}_{h} V_{h+1}^{\pi}\right)(s, a)\\
V_{H+1}^{\pi}(s)=0 \quad \forall s \in \mathcal{S}
\end{array}\right.
\end{align}
and
\begin{align}
\left\{\begin{array}{l}
V_{h}^{\star}(s)=\max _{a \in \mathcal{A}} Q_{h}^{\star}(s, a) \\
Q_{h}^{\star}(s, a)=\left(r_{h}+\mathbb{P}_{h} V_{h+1}^{\star}\right)(s, a) \\
V_{H+1}^{\star}(s)=0 \quad \forall s \in \mathcal{S}\quad ,
\end{array}\right. 
\end{align}
where we define $ \left[\mathbb{P}_{h} V_{h+1}\right](s, a) \dot= \mathbb{E}_{s^{\prime} \sim \mathbb{P}(\cdot | s, a)} V_{h+1}\left(s^{\prime}\right)$.

The agent interacts with the environment for $K$ episodes $k=1,2, \ldots, K,$ and we arbitrarily pick a starting state $s_{1}^{k}$ for each episode $k,$ and the policy before starting the $k$ -th episode is $\pi_{k}$. The total regret is
$$\operatorname{Regret}(K)=\sum_{k=1}^{K}\left( V_{1}^{*}\left(s_{1}^{k}\right)-V_{1}^{\pi^{k}}\left(s_{1}^{k}\right)\right) .$$


The algorithm $Q$-learning with UCB-Hoeffding can be found in \cite{q-learning-with-ucb}. The update rule is 
\begin{align*}
Q_{h}\left(s_{h}, a_{h}\right) \leftarrow &\left(1-\alpha_{t}\right) Q_{h}\left(s_{h}, a_{h}\right)\\
&+\alpha_{t}\left[r_{h}\left(s_{h}, a_{h}\right)+V_{h+1}\left(s_{h+1}\right)+b_{t}\right],
\end{align*}
where $t = N_h(s_h, a_h)$, $\alpha_t = \frac{H+1}{H+t}$, and $b_t$ is the exploration bonus.

The difference between our algorithm $Q$-learning with weighted bonus and $Q$-learning with UCB-Hoeffding is the update rule. The update rule of  $Q$-learning with weighted bonus is 
\begin{align*}
Q_{h}\left(s_{h}, a_{h}\right) &\leftarrow \left(1-\alpha_{t}\right) Q_{h}\left(s_{h}, a_{h}\right)\\
&+\alpha_{t}\left[r_{h}\left(s_{h}, a_{h}\right)+V_{h+1}\left(s_{h+1}\right)+\beta(s_h,a_h)b_{t}\right],
\end{align*}
where $\beta(s_h,a_h)\in [1,d] $ is the coefficient of the exploration bonus.


Since $Q$-learning with weighted bonus additionally introduces a bounded weight $\beta(s,a)$ to the UCB exploration bonus, we slightly adapt the Lemma 4.2 in the paper \cite{q-learning-with-ucb}.
\begin{lemma}(Adapted slightly from \cite{q-learning-with-ucb})\\
There exists an absolute constant $c>0$ such that, for any $p \in(0,1),$ letting $b_{t}=c \sqrt{H^{3} \iota / t},$ if $\beta(s,a) \in [1, d]$ , we have $w_{t}=2 \sum_{i=1}^{t} \alpha_{t}^{i} \beta_i b_{i} \leq 4 cd \sqrt{H^{3} \iota / t}$ and, with probability at
least $1-p,$ the following holds simultaneously for all $(s, a, h, k) \in \mathcal{S} \times \mathcal{A} \times[H] \times[K]:$
\begin{align*}
0 & \leq\left(Q_{h}^{k}-Q_{h}^{\star}\right)(s, a) \\
&\leq \alpha_{t}^{0} H+\sum_{i=1}^{t} \alpha_{t}^{i}\left(V_{h+1}^{k_{i}}-V_{h+1}^{\star}\right)\left(s_{h+1}^{k_{i}}\right)+w_{t},
\end{align*}
where $t=N_{h}^{k}(s, a)$ and $k_{1}, \ldots, k_{t}<k$ are the episodes where $(s, a)$ was taken at step $h .$
\end{lemma}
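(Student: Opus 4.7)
The plan is to mirror the original argument of Jin et al.\ for $Q$-learning with UCB-Hoeffding and show that introducing the weight $\beta(s,a)\in[1,d]$ at each bonus term changes the analysis only through an extra multiplicative factor of $d$ in the final bound. I would split the claim into three parts: the uniform bound on $w_t$, the lower inequality $Q_h^k - Q_h^\star \ge 0$ (i.e.\ optimism), and the upper inequality.

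First I would establish the $w_t$ bound directly. Since all visits to $(s,a)$ at step $h$ share the same weight $\beta_i = \beta(s,a)\le d$, we have $w_t = 2\beta(s,a)\sum_{i=1}^t \alpha_t^i b_i \le 2d \sum_{i=1}^t \alpha_t^i b_i$. Plugging $b_i = c\sqrt{H^3\iota/i}$ and invoking the learning-rate sum property $\sum_{i=1}^t \alpha_t^i/\sqrt{i} \le 2/\sqrt t$ from Lemma 4.1 of Jin et al.\ gives $\sum_{i=1}^t \alpha_t^i b_i \le 2c\sqrt{H^3\iota/t}$, whence $w_t \le 4cd\sqrt{H^3\iota/t}$ deterministically.

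Next I would derive the key algebraic identity by unrolling the weighted update rule. Writing $t = N_h^k(s,a)$ and using $\sum_{i=0}^t \alpha_t^i = 1$ together with the Bellman optimality equation $Q_h^\star(s,a) = r_h(s,a) + [\mathbb{P}_h V_{h+1}^\star](s,a)$, one obtains
\begin{align*}
(Q_h^k - Q_h^\star)(s,a) = \alpha_t^0 \bigl(H - Q_h^\star(s,a)\bigr) &+ \sum_{i=1}^t \alpha_t^i \bigl(V_{h+1}^{k_i} - V_{h+1}^\star\bigr)(s_{h+1}^{k_i}) \\
&+ \sum_{i=1}^t \alpha_t^i \bigl(V_{h+1}^\star(s_{h+1}^{k_i}) - [\mathbb{P}_h V_{h+1}^\star](s,a)\bigr) + \beta(s,a)\sum_{i=1}^t \alpha_t^i b_i.
\end{align*}
The third sum is a martingale difference sequence with respect to the natural filtration, bounded in magnitude by $H$ and with squared-weight sum controlled via $\sum_{i=1}^t (\alpha_t^i)^2 \le 2H/t$ (Lemma 4.1 of Jin et al.). Applying Azuma-Hoeffding, a union bound over all $(s,a,h,k,t)$, and absorbing the resulting $\log$ factors into $\iota$, I can pick the absolute constant $c$ so that with probability at least $1-p$ the magnitude of that martingale term is at most $\tfrac12 \sum_{i=1}^t \alpha_t^i b_i$ uniformly.

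Finally I would conclude both inequalities. For the upper bound, combining the identity with the high-probability martingale estimate and using $\beta(s,a)\le d$ and the previously computed sum, the last two lines of the identity are together dominated by $w_t$, yielding the claimed upper bound with $\alpha_t^0 H$ in front (since $H - Q_h^\star \le H$). For optimism ($Q_h^k \ge Q_h^\star$), I would induct backwards on $h$: assuming $V_{h+1}^{k_i} \ge V_{h+1}^\star$ pointwise kills the first sum sign-wise, and the weighted bonus $\beta(s,a)\sum_i \alpha_t^i b_i$ (which is at least as large as the unweighted bonus because $\beta(s,a)\ge 1$) absorbs the negative fluctuation of the martingale term, exactly as in the unweighted proof. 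The main delicate point is to verify that the one-sided concentration constant $c$ chosen for the unweighted case still works when the bonus is multiplied by $\beta(s,a)\ge 1$; this is where the lower assumption $\beta(s,a)\ge 1$ is essential, and it is the only substantive departure from the original argument.
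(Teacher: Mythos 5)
Your proposal is correct and follows essentially the same route as the paper: the paper's own proof is just the one-line observation that everything is identical to the argument of \cite{q-learning-with-ucb} except for the bound $w_{t}=2 \sum_{i=1}^{t} \alpha_{t}^{i} \beta_i b_{i}\leq 2d \sum_{i=1}^{t} \alpha_{t}^{i} b_{i} \leq 4 c d\sqrt{H^{3} \iota / t}$, which is exactly your first step. Your reconstruction of the remaining details (the unrolled update identity, the Azuma--Hoeffding step, and the role of $\beta(s,a)\geq 1$ in preserving optimism) is a faithful and accurate expansion of what the paper leaves implicit by citation.
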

\begin{proof}
The proof is the same as \cite{q-learning-with-ucb}), except for $w_{t}=2 \sum_{i=1}^{t} \alpha_{t}^{i} \beta_i b_{i}\leq 2d \sum_{i=1}^{t} \alpha_{t}^{i} b_{i} \leq 4 c d\sqrt{H^{3} \iota / t}$ .
\end{proof}

\begin{theorem}
There exists an absolute constant $c>0$ such that, for any $p\in (0,1)$, choosing $b_t = c\sqrt{H^3\iota /t}$, if there exists a positive real $d$, such that the weights $\beta(s,a) \in [1, d]$ for all state-action pairs $(s,a)$, then with probability $1-p$, the total regret of Q-learning with weighted bonus is at most $O(\sqrt{dH^4SAT\iota})$, where $\iota :=\log(SAT/p)$.
\label{theorem:convergence}
\end{theorem}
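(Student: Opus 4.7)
The plan is to mirror the regret analysis of $Q$-learning with UCB-Hoeffding in Jin et al.~\cite{q-learning-with-ucb}, replacing the single-bonus term $b_t$ with the weighted bonus $\beta(s,a)\,b_t$ everywhere it appears, and then tracking how the bound $\beta(s,a)\le d$ propagates an extra factor of $d$ through the final regret. The adapted lemma above already does the work at the single state-action level: it shows that the optimistic error of $Q_h^k(s,a)$ decomposes into a leading $\alpha_t^0 H$ term, a ``propagation'' term $\sum_i \alpha_t^i(V_{h+1}^{k_i}-V_{h+1}^\star)(s_{h+1}^{k_i})$, and a cumulative bonus term $w_t\le 4cd\sqrt{H^3\iota/t}$. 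So the central task is to feed this $w_t$-bound into the recursive regret decomposition and check that only a factor of $d$ is picked up relative to the Jin et al.\ analysis.

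First I would write $\delta_h^k := (V_h^k-V_h^{\pi_k})(s_h^k)$ and $\phi_h^k := (V_h^k - V_h^\star)(s_h^k)$, so that $\text{Regret}(K)=\sum_k\delta_1^k \le \sum_k\phi_1^k$ by the optimism established in the adapted lemma. Next I would derive the one-step recursion for $\phi_h^k$: using $Q_h^k(s_h^k,a_h^k)-Q_h^\star(s_h^k,a_h^k) \le \alpha_{t}^0 H + \sum_{i=1}^{t}\alpha_t^i\phi_{h+1}^{k_i} + w_t + \xi_h^k$, where $\xi_h^k$ collects the Bellman noise $[(\hat{\mathbb P}_h-\mathbb P_h)V_{h+1}^\star](s_h^k,a_h^k)$, which is a martingale difference sequence bounded by $H$ and can be controlled by Azuma-Hoeffding to contribute $O(\sqrt{H^2 T\iota})$ with probability $1-p$.

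Then I would sum the per-episode recursion across $k\in[K]$ and $h\in[H]$. The propagation term $\sum_{k,h}\sum_i\alpha_t^i\phi_{h+1}^{k_i}$ telescopes in the standard way using the identity $\sum_{t=i}^\infty \alpha_t^i = 1+1/H$ (this is exactly where the $H$ inflation happens). The $\alpha_t^0 H$ term contributes at most $SAH^2$ because $\alpha_t^0=\mathbb{I}\{t=0\}$ so it is nonzero only the first visit to each $(s,a,h)$. The $w_t$-term contributes $\sum_{k,h} 4cd\sqrt{H^3\iota /N_h^k(s_h^k,a_h^k)}$, which by the Cauchy-Schwarz / pigeonhole bound $\sum_{k,h} 1/\sqrt{N_h^k} \le O(\sqrt{SAHK})$ gives $O(d\sqrt{H^4 SAT\iota})$. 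After collecting terms, using $T=KH$, and absorbing the lower-order $SAH^2$ and Azuma contributions, the dominant term is $O(\sqrt{dH^4 SAT\iota})$ (the square root scaling in $d$ would improve to linear only if the weights were in $[1,d]$ tightly; here I keep the conservative $d$ factor as stated in the theorem).

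The main obstacle I anticipate is purely bookkeeping: making sure the extra $d$ multiplier on the bonus does not also multiply the propagation term when the recursion unfolds, because if it did the regret would blow up to $d^{H/2}$. Fortunately, $\beta(s,a)$ only appears inside $w_t$, not inside the convex combination that drives the recursion on $\phi_{h+1}^{k_i}$, so the $d$ factor remains external and enters linearly in $w_t$ and linearly (after the $\sqrt{\cdot}$ pigeonhole) in the final bound. I would therefore spend the bulk of the writeup verifying this separation, checking the $\sum_t \alpha_t^i$ identities exactly as in Lemma 4.1 of Jin et al., and confirming that the Azuma event has probability at least $1-p$ under the choice $\iota=\log(SAT/p)$ via a union bound over the $SAT$ state-action-episode triples.
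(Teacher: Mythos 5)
Your proposal is correct and takes essentially the same route as the paper: the paper's proof simply invokes the regret decomposition from the UCB-Hoeffding analysis of \cite{q-learning-with-ucb} as a black box, $\textup{Regret}(K) \le O\bigl(H^2SA + cH\sqrt{T\iota} + \sum_{h,k} w_{n_h^k}\bigr)$, and then performs exactly the pigeonhole/integral bound on the weighted-bonus sum that you describe, with the weight entering only through $w_t \le 4cd\sqrt{H^3\iota/t}$ and never through the propagation term. Two cosmetic slips to fix in your writeup: optimism gives $\textup{Regret}(K) \le \sum_k \delta_1^k$ with $\phi_1^k \le \delta_1^k$ (not the reverse inequality you wrote), and the honest bonus-sum bound is linear in $d$, i.e.\ $O(d\sqrt{H^4SAT\iota})$, the same looseness already present in the paper's stated $O(\sqrt{dH^4SAT\iota})$.
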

\begin{proof}
From the proof in \cite{q-learning-with-ucb}, we have
\begin{align*}
\textup{Regret}(K) \leq O\left(H^{2} S A+cH\sqrt{T\iota} +\sum_{h=1}^{H} \sum_{k=1}^{K}\left(\beta_{n_{h}^{k}}\right)\right),
\end{align*}
 where 
$n_{h}^k = N_h(s_h^k,a_h^k)$.

Considering $\sum_{k=1}^{K}\beta_{n_{h}^{k}}$, we have
\begin{align*}
 \sum_{k=1}^{K}\beta_{n_{h}^{k}}
\leq &  \sum_{k=1}^{K} 4cd \sqrt{\frac{H^3\iota}{n_h^k}}\\
\leq & \sum_{s,a} \sum_{n=1}^{N_{h}^K(s,a)} 4cd \sqrt{\frac{H^3\iota}{n}}\\
\overset{b}{\leq} &\sum_{s,a} \sum_{n=1}^{\frac{K}{SA}}4cd \sqrt{\frac{H^3\iota}{n}}\\
\leq & \sum_{s,a}\left( \int_{1}^\frac{K}{SA}4cd \sqrt{\frac{H^3\iota}{x}}dx\right)\\
\leq & O(\sqrt{dH^2SAT\iota}),
\end{align*}
where in $\overset{b}{\leq}$, we use the fact that $\sum_{s,a}\sum_{n=1}^{N_{h}^K(s,a)} 4cd \sqrt{\frac{H^3\iota}{n}}$ is maximized when $N_h^K(s,a) = \frac{N}{SA}$ for every state-action pair.
Finally, we have
\begin{align*}
\textup{Regret}(K) \leq O\left(H^{2} S A+cH\sqrt{T\iota} +\sqrt{dH^4SAT\iota}\right).
\end{align*}
The total regret of $Q$-learning with weighted bonus is at most $O(\sqrt{dH^4SAT\iota})$. We complete the proof.
\end{proof}

\section{Details of Algorithm Implementation and Experimental Settings}

\subsection{Details of R3L Environments}

In general, we divide real-world decision-making tasks with non-replenishable resources into two categories. In the first category of tasks, all actions consume resources and different actions consume different quantities of resources, such as robotic control with limited energy. In these tasks, the agent needs to seek actions that achieve high rewards while consuming small quantities of resources. In the second category, only specific actions consume resources, such as video games with consumable items. In these tasks, the agent needs to seek proper states to consume the resources. 

To evaluate popular RL methods in both kinds of R3L tasks, we design three series of environments with limited resources based on Gym \cite{gym} and Mujoco \cite{mujoco}. The first is the autonomous electric robot task. In this task, the resource is electricity and all actions consume electricity. The quantity of consumed electricity depends on the amplitude of the action $a$, defined by $0.1*\|a\|_2^2$. 
The second is the robotic delivery task. In this task, the resource is goods and only the ``unload'' action consumes the goods. The agent needs to ``unload'' the goods at an unknown destination. The third is a task that combines the first and the second. In this task, resources are goods and electricity. The agent needs to ``unload'' the goods at an unknown destination with limited electricity. We provide detailed descriptions of these environments in the following. 

\noindent \textbf{Autonomous electric robot tasks}

\textbf{States} We augment the states with the quantity of the remaining electricity. For example, the state space of Electric Mountain Car is the same as the state space of Mountain Car, except for adding one dimension to represent the quantity of the remaining electricity. In terms of the initial quantity of electricity, we use Surprise in tasks without resources and record the total quantity of electricity $r_0$ consumed in an episode when Surprise converges. Intuitively, $r_0$ represents the total quantity of electricity needed for an expert to complete the task. For Electric tasks, we set the initial quantity of electricity $2r_0$. Specifically, the quantities of electricity are 12, 140, and 32 for Electric Mountain Car, Electric Ant, and Electric Half-Cheetah, respectively.

\textbf{Actions} The action space remains unchanged and the consumed electricity is $0.1 * \|a\|_2^2$ after the agent takes action $a$. That is, the consumed electricity positively correlates with the amplitude of actions. 

\textbf{Rewards and Terminal States} 
The rewards are 0 until the agent reaches the destination. The reward is $100 + 100 * \frac{u}{I_{\max}}$, where $u$ is the remaining electricity when reaching the destination and $I_{\max}$ is the quantity of initial electricity. When the agent exhausts electricity or reaches the destination, the environment will be reset. For Electric Ant and Electric Half-Cheetah, 
the position of the agent is denoted by the coordinates $(x,y,h)$. We define the destination by the region $x = 4 $ and $x = 9$, respectively. For Electric Mountain Car, the destination is the top of the hill.

\noindent \textbf{Robotic delivery tasks}

\textbf{States} We augment the states with the quantity of the remaining goods. For example, the state space of Delivery Mountain Car is the same as the state space of Mountain Car, except for adding one dimension to represent  the number of remaining goods. For Delivery Mountain Car, the number of goods is ten at the beginning of each episode. For Delivery Ant and Delivery Half-Cheetah, the number of goods is four at the beginning of each episode.

\textbf{Actions} We augment the actions with the number of goods to be unloaded. For example, the action space of Delivery Mountain Car is the same as the action space of Mountain Car, except for adding one dimension to represent the number of goods to be unloaded. For Delivery Mountain Car, the agent can unload $u$ units of goods at each step, $u\in[0,1]$. For Delivery Ant and Delivery Half-Cheetah, the agent unloads one or zero unit of goods at each step.



\textbf{Rewards and Terminal States} 
The rewards are 0 until the agent ``unloads" the goods at the destination. For Delivery Mountain Car, the destination is the top of the hill. The reward is $100\cdot u$, where $u$ is the number of goods unloaded at the top of the hill. When the agent reaches the top of the hill and carries no goods, the environment will be reset. For Delivery Ant and Delivery Half-Cheetah, the position of the agent is denoted by the coordinates $(x,y,h)$. We define the destination by the region $x \in [4,5]$. The reward is 100 if the agent reaches the destination and ``unloads" the goods. When the agent enters the destination and ``unloads" the goods, the environment will be reset.

\noindent \textbf{Robotic delivery tasks with limited electricity}

\textbf{States} We augment the states with the quantities of the remaining electricity and the remaining goods. For example, the state space of Electric Delivery Mountain Car is the same as the state space of Mountain Car, except for adding two dimensions to represent the quantities of the remaining electricity and the remaining goods. We set the initial quantity of electricity the same as that in the corresponding autonomous electric robot tasks. We set the initial quantity of goods the same as that in the corresponding robotic delivery tasks. Specifically, the quantities of electricity are 12, 140, and 32 for Electric Delivery Mountain Car,   Electric Delivery Ant, and Electric Delivery Half-Cheetah, respectively. The quantities of goods are 10, 4, and 4 for Electric Delivery Mountain Car,   Electric Delivery Ant, and Electric Delivery Half-Cheetah, respectively.

\textbf{Actions} 
First, the consumed electricity is $0.1 * \|a\|_2^2$ after the agent takes action $a$. That is, the consumed electricity positively correlates with the amplitude of actions. Second, we augment the actions with the quantity of goods to be unloaded. For example, the action space of Electric Delivery Mountain Car is the same as the action space of Mountain Car, except for adding one dimension to represent the number of goods to be unloaded. For Electric Delivery Mountain Car, the agent can unload $u$ units of goods at each step, $u\in[0,1]$. For Electric Delivery Ant and Electric Delivery Half-Cheetah, the agent unloads one or zero unit of goods at each step.

\textbf{Rewards and Terminal States}
The rewards are 0 until the agent reaches the destination and ``unloads" the goods at the destination. The reward is $100 + 100 * \frac{u}{I_{\max}}$, where $u$ is the remaining electricity when delivering the goods to the destination and $I_{\max}$ is the quantity of initial electricity. When the agent exhausts electricity or delivery the goods to the destination, the environment will be reset. For Electric Delivery Ant and Electric Delivery Half-Cheetah, 
the position of the agent is denoted by the coordinates $(x,y,h)$. We define the destination by the region $x = 4 $ and $x = 9$, respectively. For Electric Delivery Mountain 
Car, the destination is the top of the hill.

\begin{table}[t]
	\caption{Shared parameters used in all experiments.}\label{hyper:shared}
	\centering 
	\vspace{0.3cm}
	\begin{tabular}{ l  l }
		\toprule
		Parameter  & Value  \\ 
		\midrule 
		optimizer &Adam \\
		learning rate  & $ 3\cdot10^{-4} $\\
		discount ($ \gamma $)&0.99\\
		replay buffer size&$ 10^6 $\\
		number of hidden layers &2 \\
		number of samples per minibatch &256\\
		nonlinearity & ReLU\\
		target smoothing coefficient &0.005\\
		target update interval & 1\\
		\bottomrule
	\end{tabular}
\end{table}

\begin{table}[t]
	\caption{Hyperparameters for models in Electric/Delivery Ant and Electric/Delivery Half-Cheetah environments.}\label{hyper:models_high_dimensional}
	\centering 
	\vspace{0.3cm}
	\begin{tabular}{ l  l }
		\toprule
		Parameter  & Value  \\ 
		\midrule 
		ensemble size & 1 (Surprise/ RAEB) \\ 
		&  32 (JDRX) \\
		hidden layers  & 4\\
		hidder layer size & 512 \\
		batch size & 256 \\
		non-linearity & Swish \\
		learning rate & $3 \times 10^{-4}$\\
		\bottomrule
	\end{tabular}
\end{table}

\begin{table}[t]
	\caption{Hyperparameters for models in Electric/Delivery Mountain Car environment.}\label{hyper:models_low_dimensional}
	\centering 
	\vspace{0.3cm}
	\begin{tabular}{ l  l }
		\toprule
		Parameter  & Value  \\ 
		\midrule 
		ensemble size & 1 (Surprise/ RAEB) \\ 
		&  32 (JDRX) \\
		hidden layers  & 1\\
		hidder layer size & 32 \\
		batch size & 256 \\
		non-linearity & Swish \\
		learning rate & $3 \times 10^{-4}$\\
		\bottomrule
	\end{tabular}
\end{table}

\begin{figure}[t]
    \centering
    \includegraphics[width=220pt]{./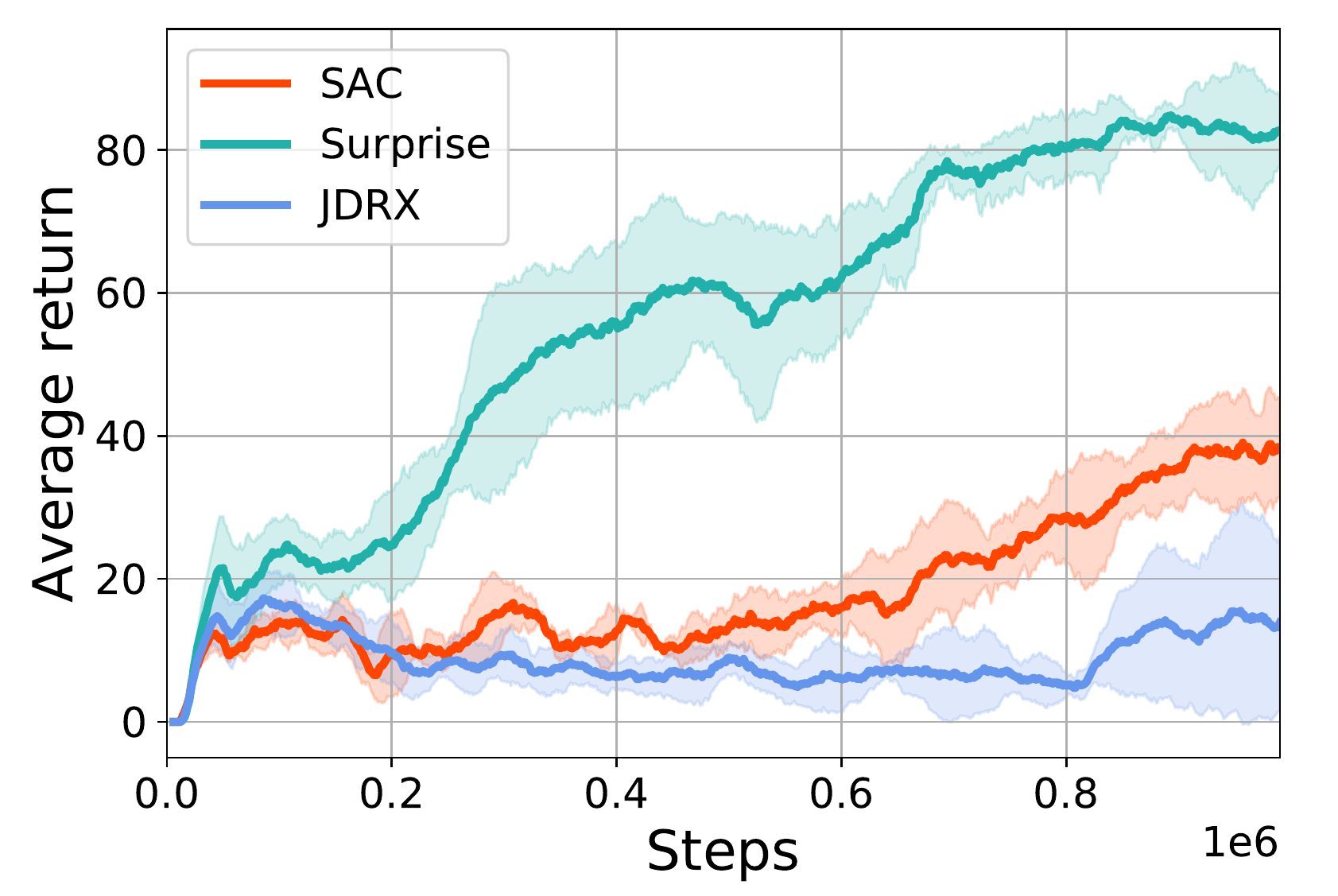}
    \caption{We compare Surprise with JDRX and SAC in Ant Corridor environment. The solid curves correspond to the mean and the shaded region to the standard deviation over 3 random seeds. Moreover, We smooth curves uniformly for visual clarity.}
    \label{fig:reproducing}
\end{figure}

\begin{figure}[t]
    \centering
    \begin{subfigure}{0.48\columnwidth}
        \includegraphics[width=\textwidth]{./figures/analysis/difficulty_mountaincar_200000.pdf}
        \subcaption{Exploration behavior of baselines on Delivery Mountain Car.}
        \label{fig:max_height_baselines}
    \end{subfigure}
    \begin{subfigure}{0.48\columnwidth}
        \includegraphics[width=\textwidth]{./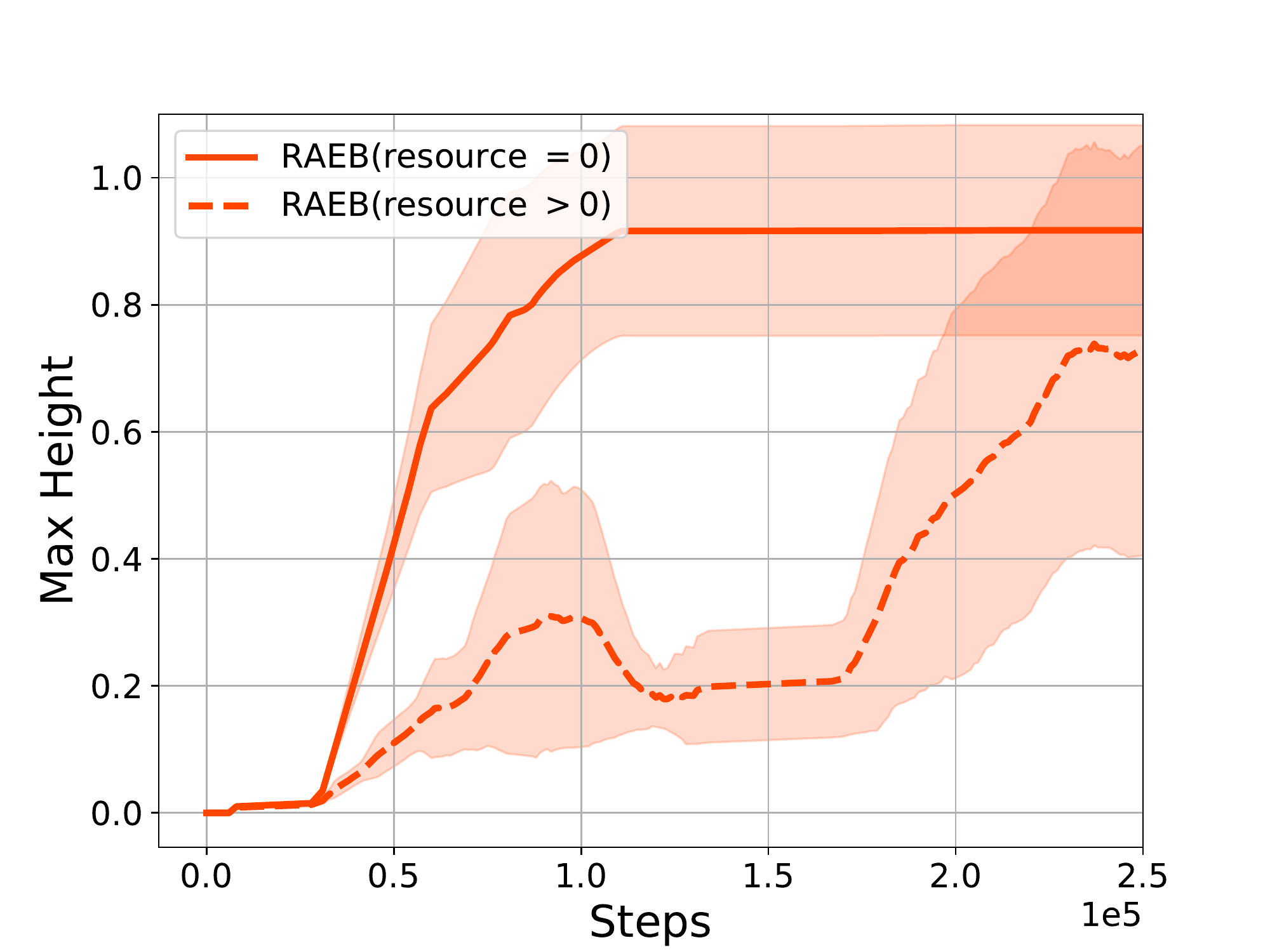}
        \subcaption{Exploration behavior of RAEB on Delivery Mountain Car.}
        \label{fig:max_height_raeb}
    \end{subfigure}
    \caption{The max height during training in the Delivery Mountain Car environment of RAEB and baselines. The solid curves correspond to the max height that the car reaches in an episode no matter whether it exhausts resources or not, while the dashed curves correspond to the max height the car reaches before exhausting the resources.}
    \label{fig:max_height}
\end{figure}

\begin{table}[t]
    \centering
    \caption{The improvement of the sample efficiency of RAEB compared with the baselines on Delivery Ant. \#Steps denotes the number of steps required to achieve the corresponding average return. RAEB achieves the average return 11.13 in the first $1\times 10^5$ steps, while the best baselines achieves the average return 5.65 in the first $1\times 10^6$ steps.}
    \label{tab:improvement_of_sample_efficiency}
    \vspace{3mm}
    \begin{tabular}{ccc}
        \toprule
         \textbf{Algorithm} & \textbf{The average return} & \textbf{\#Steps} \\
         \midrule
         RAEB & 11.13 &  $\mathbf{1 \times 10^5}$\\
         SAC & 2.15 & $\mathbf{1 \times 10^6}$\\
         Surprise & 1.25 &  $\mathbf{1 \times 10^6}$\\
         JDRX & 0.1 & $\mathbf{1 \times 10^6}$\\
         SimHash & 0.2 & $\mathbf{1 \times 10^6}$\\
         NovelD & 0.1 & $\mathbf{1 \times 10^6}$\\
         Lagrangian & 5.65 & $\mathbf{1 \times 10^6}$\\
         \bottomrule
    \end{tabular}
\end{table}

\begin{table}[t]
    \centering
    \vspace{-2mm}
    \caption{The average time steps when exhausting the resources for PPO, SAC, and Surprise in each episode on Delivery Mountain Car. This table shows that PPO, SAC, and Surprise exhaust resources in much fewer steps than our proposed approach (RAEB).}
    \label{tab:avg_time_steps_exhausting_resources}
    \vspace{3mm}
    \begin{tabular}{cc}
        \toprule
         \textbf{Algorithm} & \textbf{The average time steps} \\
         \midrule
         PPO & 19.74 \\
         SAC & 72.7 \\
         Surprise & 25.8 \\
         \textbf{RAEB} & \textbf{118.33} \\
         \bottomrule
    \end{tabular}
\end{table}

\begin{figure*}[ht] 
    \centering 
    \includegraphics[width=0.90\textwidth]{./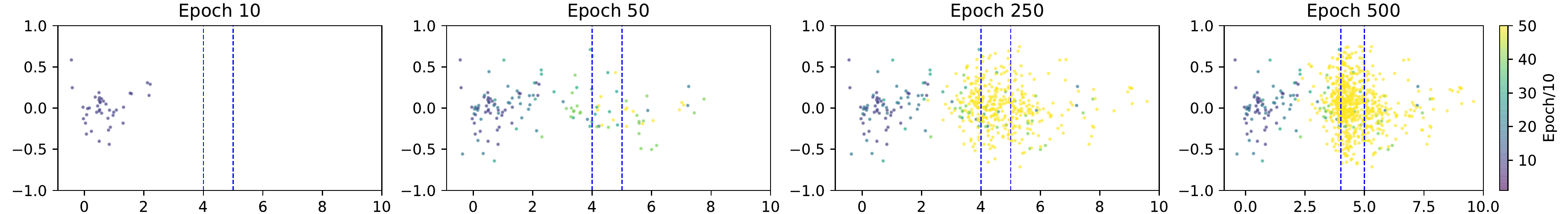}
    \caption{Illustration of RAEB exploration in the Delivery Ant environment. Colored points illustrate states where the agent consumes resources achieved by the policy after every one epoch. The color indicates the number of epochs. The region between the two dashed lines is the destination.}
    \label{fig:resource_ant_corridor_visualization}
\end{figure*}

\begin{figure*}[ht]
    \centering
    \begin{subfigure}{0.23\textwidth}
        \includegraphics[width=\textwidth]{./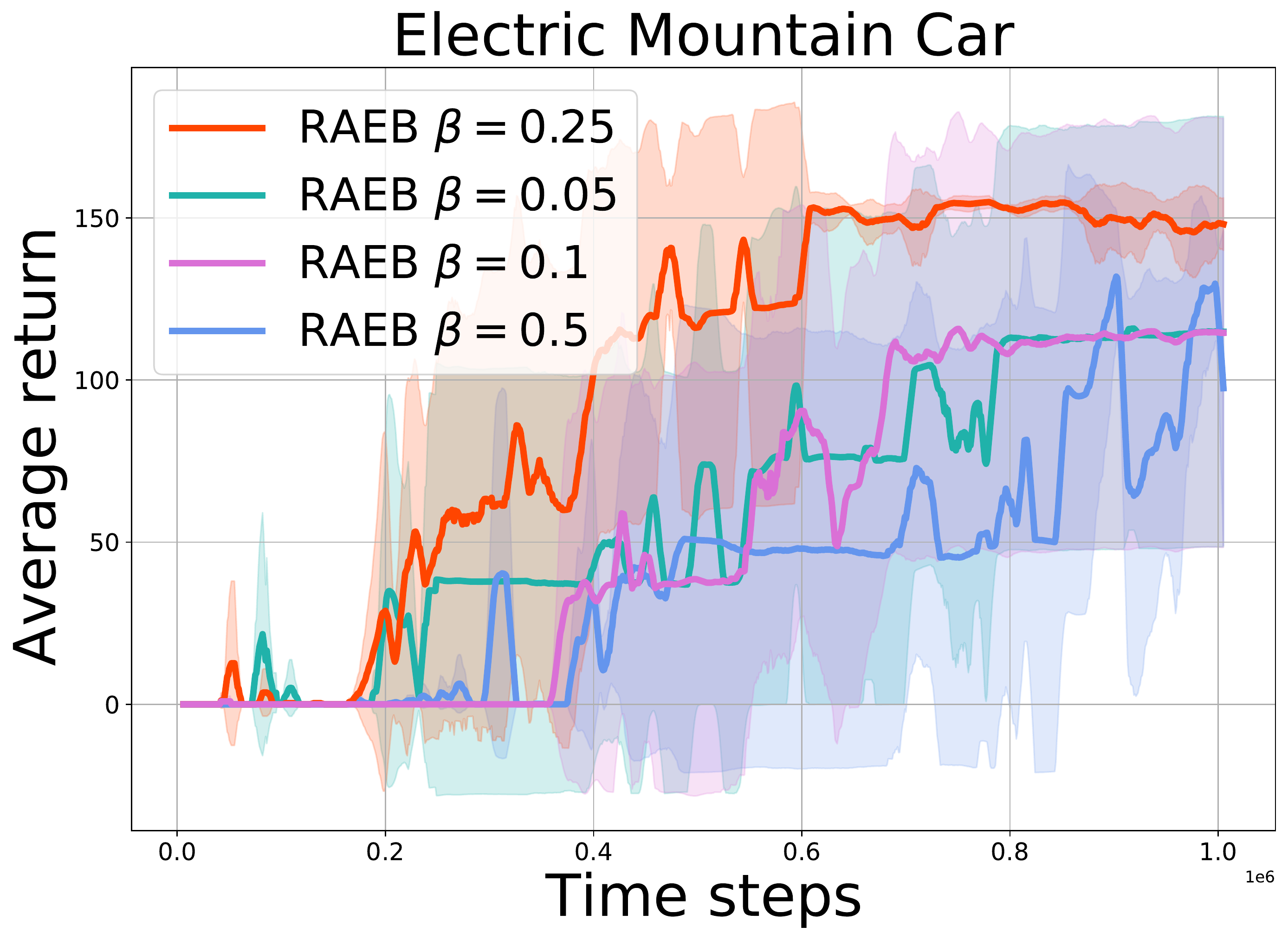}
        \subcaption{Sensitivity analysis of RAEB to the hyperparameter $\beta$.}
        \label{fig:sensitivity_beta_electric_car}
    \end{subfigure}
    \begin{subfigure}{0.23\textwidth}
        \includegraphics[width=\textwidth]{./aaai23_figures/sensitivity/beta_delivery_ant.pdf}
        \subcaption{Sensitivity analysis of RAEB to the hyperparameter $\beta$.}
        \label{fig:sensitivity_alpha_electric_car}
    \end{subfigure}
    \begin{subfigure}{0.23\textwidth}
        \includegraphics[width=\textwidth]{./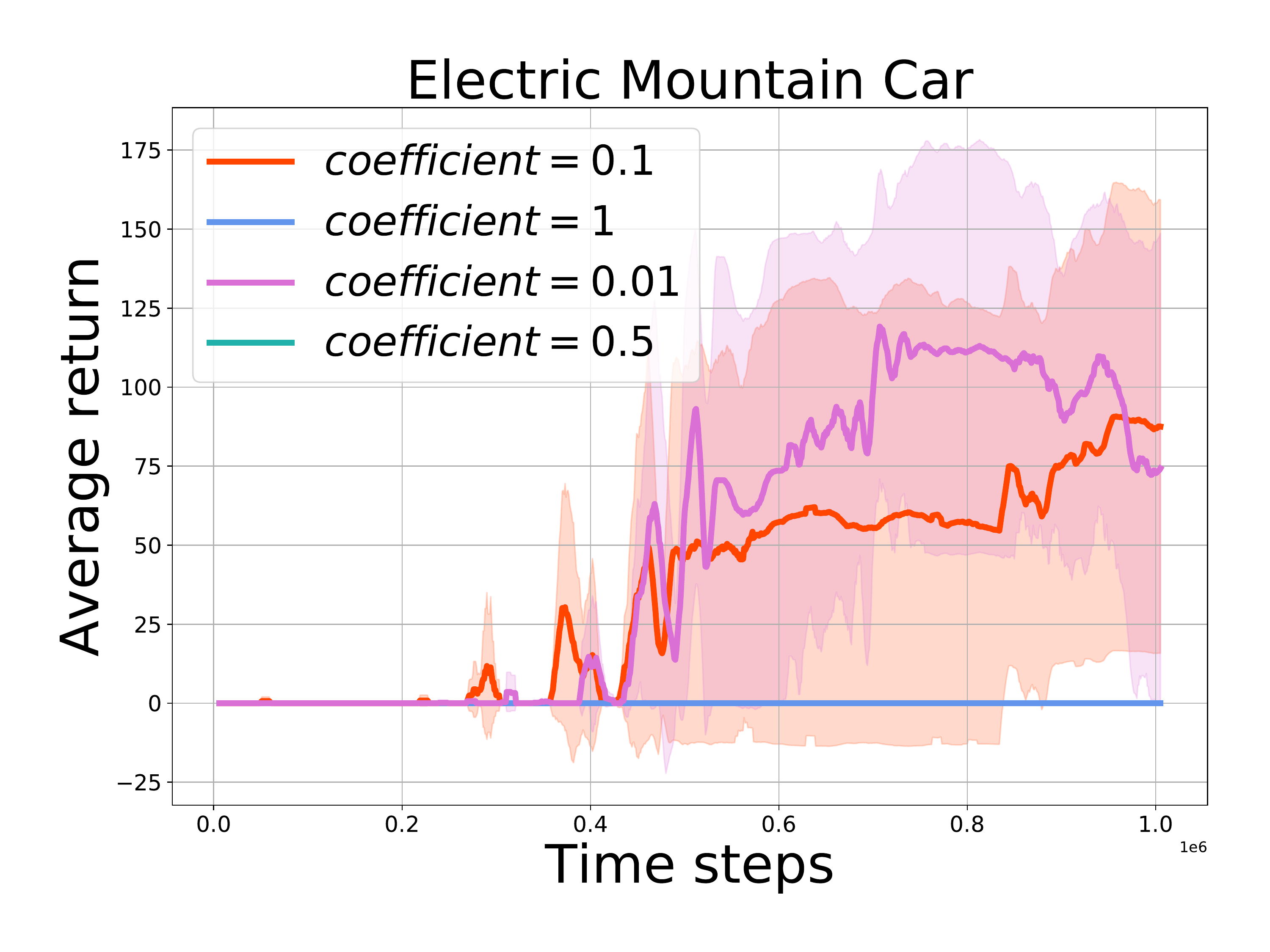}
        \subcaption{Sensitivity analysis of SurpriseRB to $\beta$.}
        \label{fig:sensitivity_surpriserb_electric_car}
    \end{subfigure}
    \begin{subfigure}{0.23\textwidth}
        \includegraphics[width=\textwidth]{./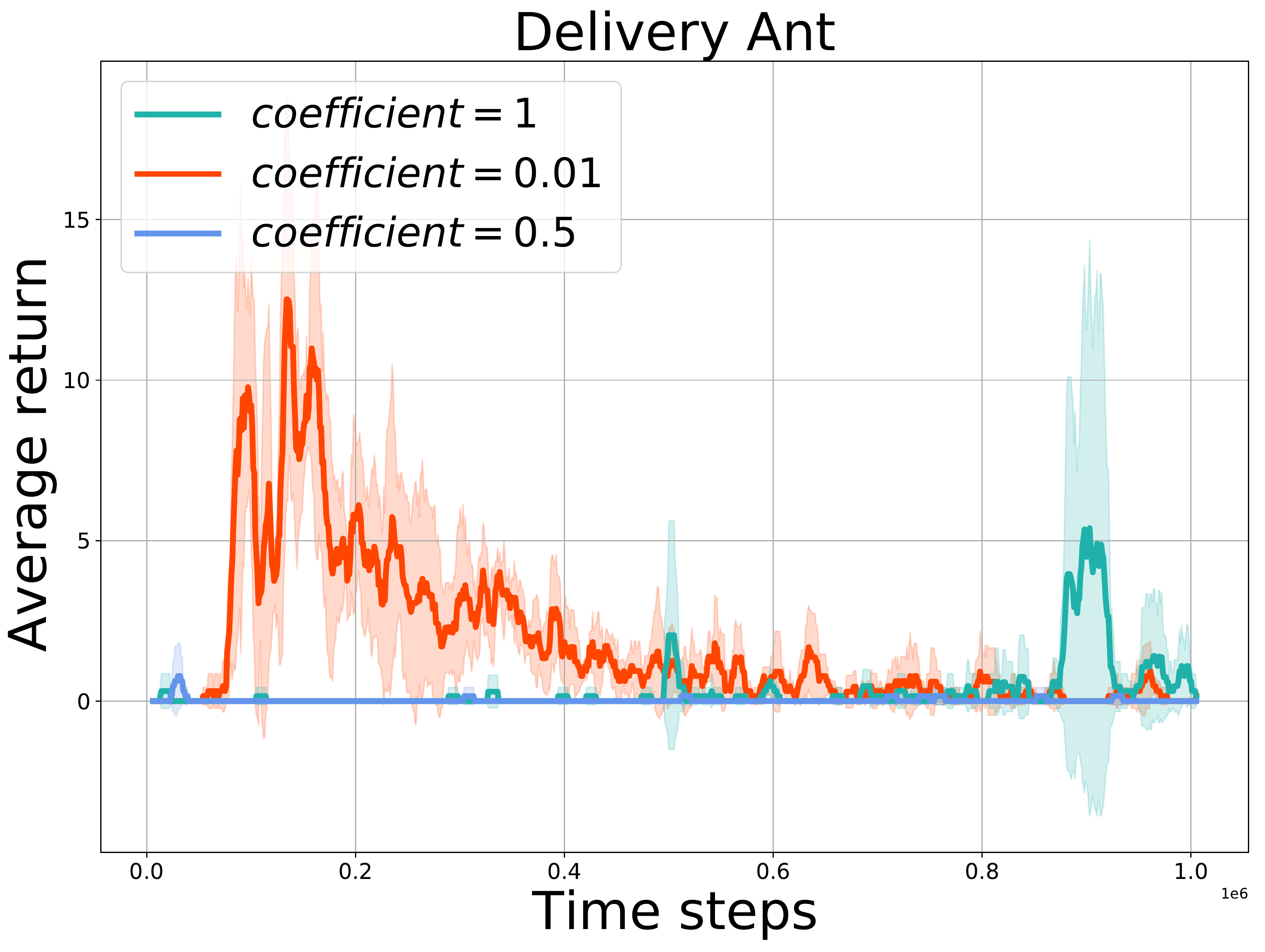}
        \subcaption{Sensitivity analysis of SurpriseRB to  $\beta$.}
        \label{fig:sensitivity_surpriserb_delivery_ant}
    \end{subfigure}
    \vspace{-3mm}
    \caption{Performance of RAEB and SurpriseRB with different $\beta$ on Electric Mountain Car and Delivery Ant.}
    \label{fig:sensitivity_analysis}
\end{figure*}

\begin{figure*}[!ht]
    \centering
    \begin{subfigure}{0.24\textwidth}
        \includegraphics[width=\textwidth]{./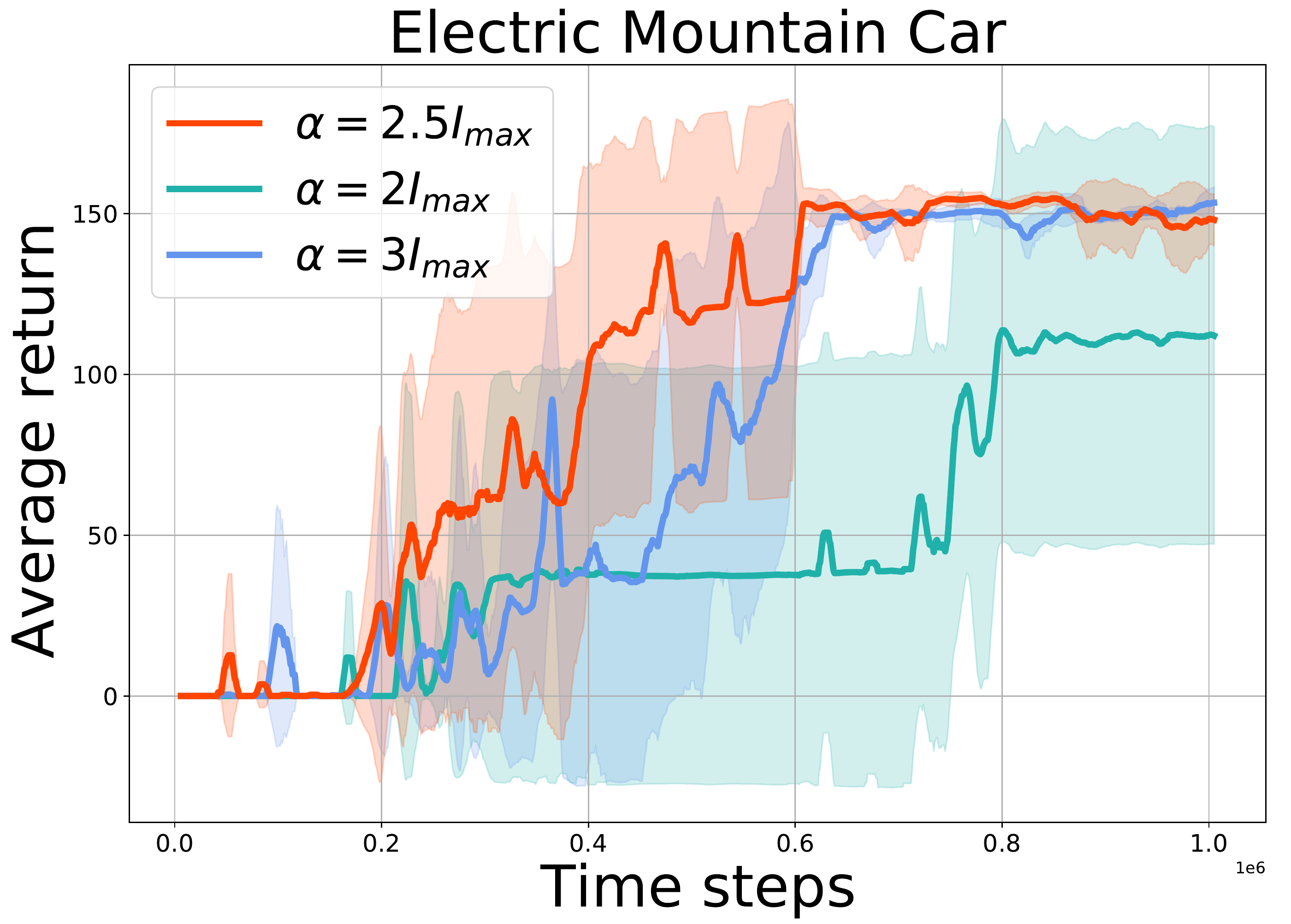}
        \subcaption{Sensitivity analysis (RAEB $\alpha$)}
        \label{fig:sensitivity_alpha_electric_car}
    \end{subfigure}
    \begin{subfigure}{0.24\textwidth}
        \includegraphics[width=\textwidth]{./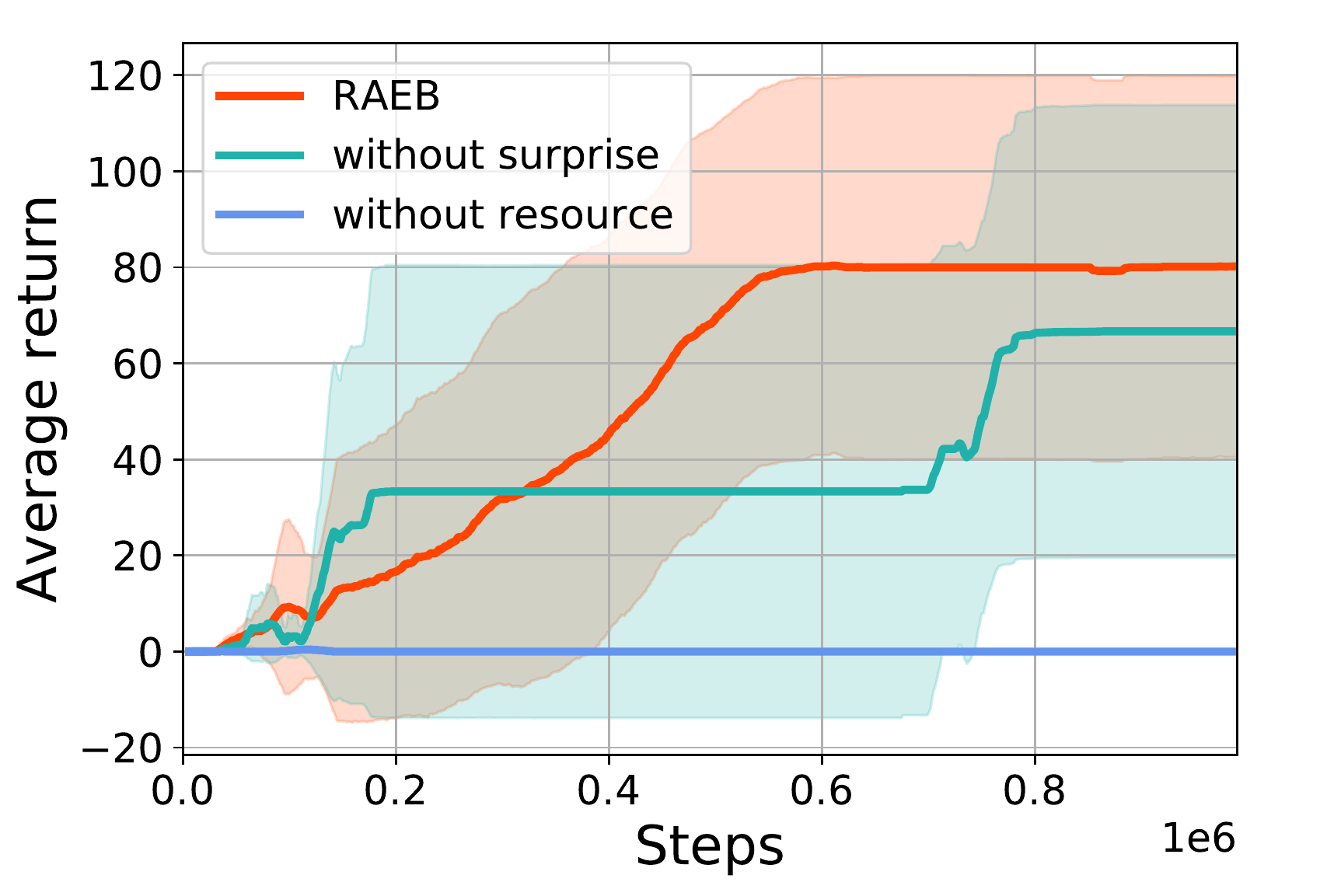}
        \subcaption{Delivery Half-Cheetah}
        \label{fig:component_cheetah}
    \end{subfigure}
    \begin{subfigure}{0.24\textwidth}
        \includegraphics[width=\textwidth]{./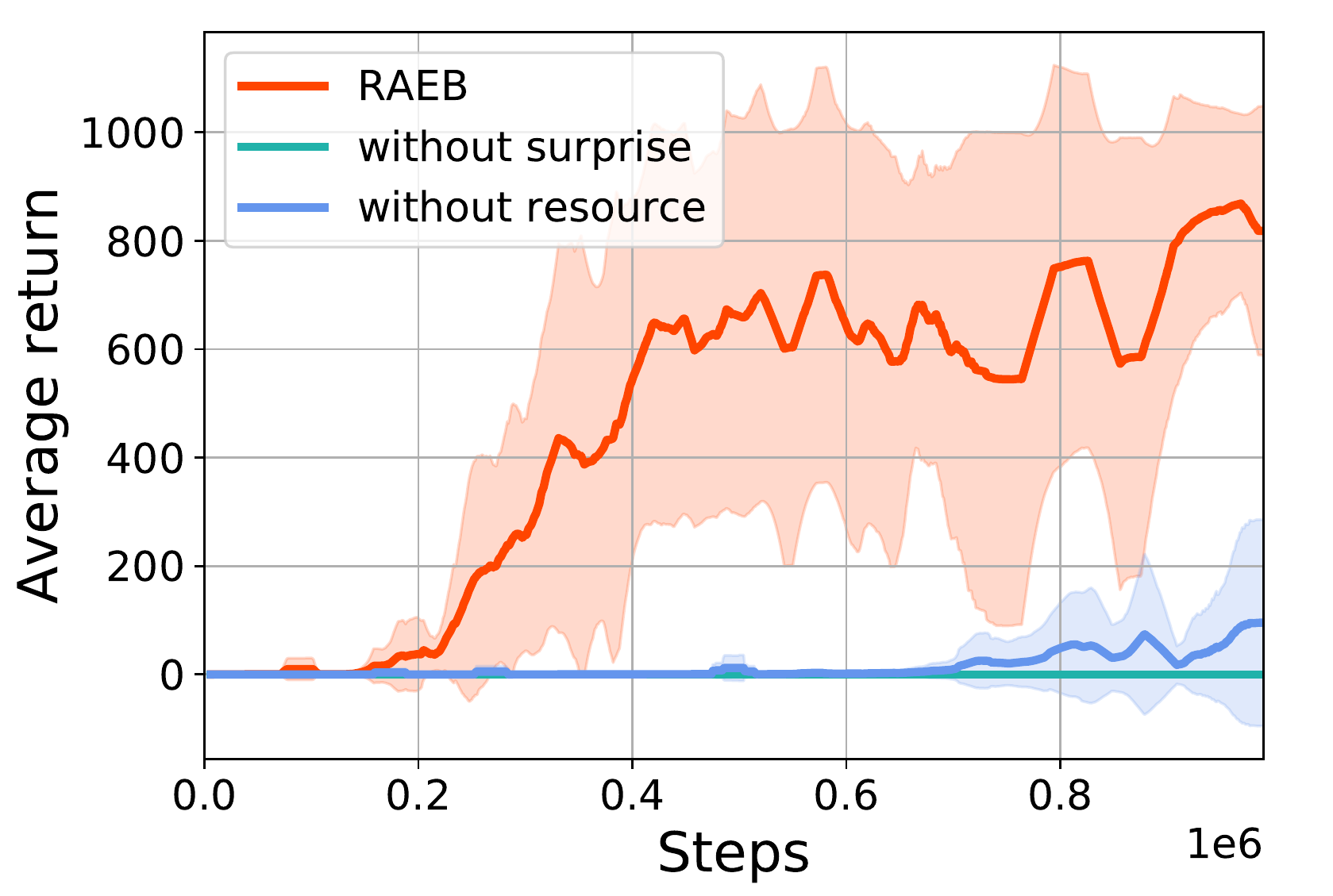}
        \subcaption{Delivery Mountain Car}
        \label{fig:component_mountaincar}
    \end{subfigure}
    \begin{subfigure}{0.24\textwidth}
        \includegraphics[width=\textwidth]{./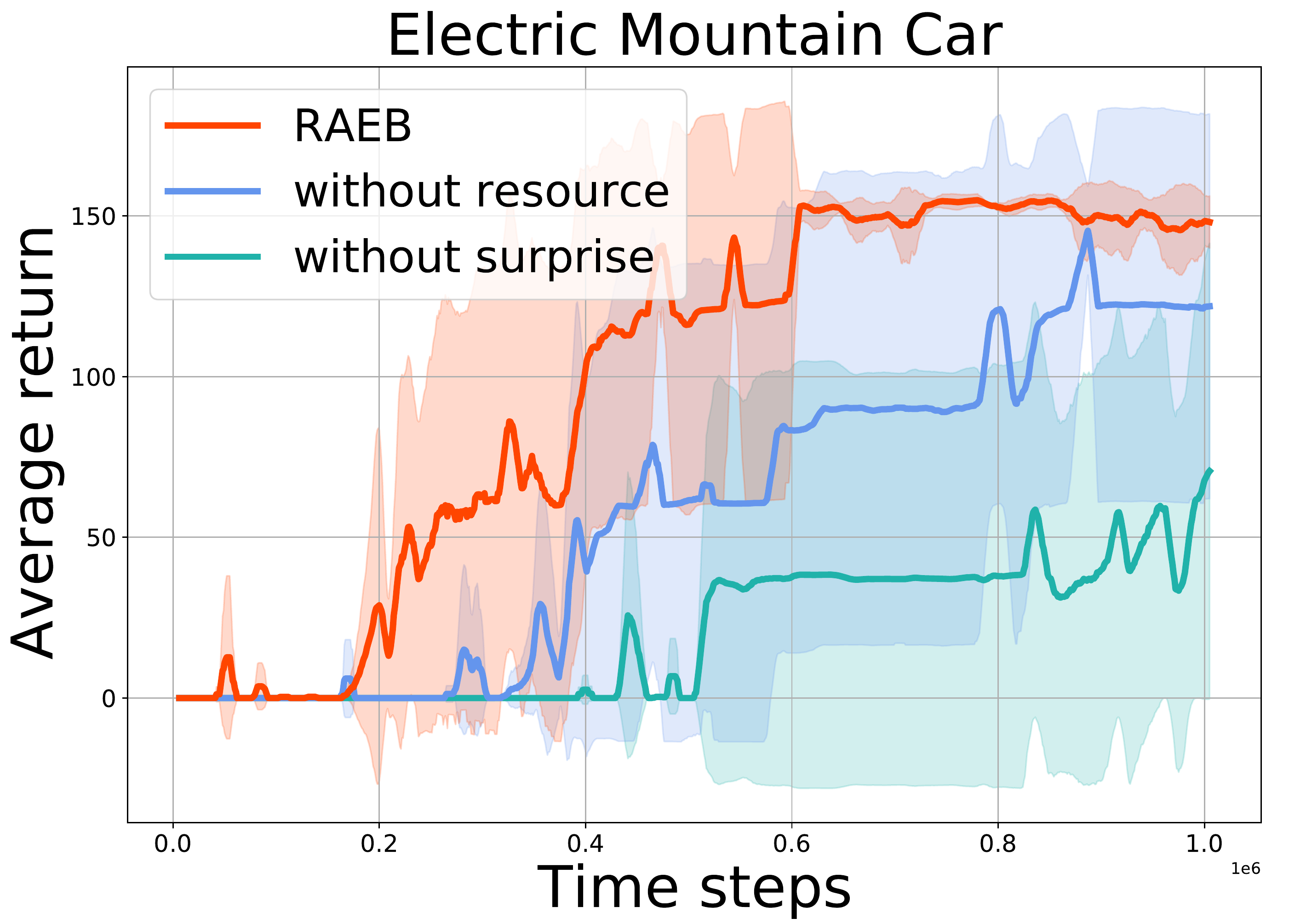}
        \subcaption{Electric Mountain Car}
        \label{fig:component_electric_car}
    \end{subfigure}
    \vspace{-3mm}
    \caption{(a) Sensitivity analysis of RAEB to $\alpha$ on Electric Mountain Car. (b)-(d) Performance of RAEB, RAEB without Surprise, and RAEB without resources on Delivery Half-Cheetah, Delivery Mountain Car, and Electric Mountain Car.}
    \label{fig:component_cheetah_car}
\end{figure*}

\subsection{Details of Experimental Settings and Hyperparameters}

\textbf{Implementation of our Baselines} We list the implementations of our baselines, including Surprise, Jensen-R\'enyi Divergence Reactive Exploration (JDRX), proximal policy optimization (PPO), and soft actor critic (SAC). Except for proximal policy optimization (PPO), we implemented these algorithms in a unified code framework for a fair comparison. Moreover, we use the PyTorch implementation of PPO in \url{https://github.com/openai/spinningup}. 

\textbf{Hyperparameters} 
We list common parameters used in comparative evaluation and ablation study in Table \ref{hyper:shared}. We use the same hyperparameters as SAC \cite{pmlr-v80-haarnoja18b} if possible in all experiments. For forward dynamics models, we use the same hyperparameters as those of JDRX \cite{max} if possible. For Electric/Delivery Ant and Electric/Delivery Half-Cheetah tasks,
we use a neural network policy with two layers of 128 units and a q-value network with two layers of 256 units. We list the Hyperparameters for Models in Electric/Delivery Ant and Electric/Delivery Half-Cheetah environments in Table \ref{hyper:models_high_dimensional}. For Electric/Delivery Mountain Car task, we use a neural network policy with one layer of 32 units and a q-value network with one layer of 32 units. We list the Hyperparameters for Models in Electric/Delivery Mountain Car in Table \ref{hyper:models_low_dimensional}.

\subsection{Hardware} We train and evaluate all methods on a single machine that contains eight GPU devices (NVidia GeForce GTX 2080 Ti) and 16 Intel Xeon CPU E5-2667 v4 CPUs.

\section{Additional Experimental Results}

\subsection{Choosing our base algorithm}

To evaluate the exploration ability of existing exploration strategies, we design a task where the agent (an ant) starts at the left corner of a corridor and aims to reach an unknown destination. The agent can get the nonzero reward $100$, when the x-coordinate $ > 4$. Otherwise, the agent receives zero rewards. Once the agent gets nonzero rewards, the environment will be reset. We call the environment Ant Corridor. Figure \ref{fig:reproducing} shows that Surprise significantly outperforms JDRX and SAC in the Ant Corridor environment. Therefore, we choose Surprise as the base algorithm of our method. 

\subsection{The sample efficiency of RAEB compared with baselines} Table \ref{tab:improvement_of_sample_efficiency} shows the improvement of the sample efficiency of RAEB compared with the baselines on Delivery Ant. The results show that RAEB improves the sample efficiency by up to an order of magnitude compared to baselines. 

\subsection{More results about the exploration behavior of popular RL methods} Table \ref{tab:avg_time_steps_exhausting_resources} shows the average time steps when exhausting the resources for PPO, SAC, and Surprise in each episode on Delivery Mountain Car. The results show that PPO, SAC, and Surprise exhaust resources in much fewer steps than our proposed approach (RAEB).

\subsection{More Results for Illustrating the Exploration of RAEB}

\textbf{Analyzing the learning behavior of RAEB} 
Figure \ref{fig:max_height} shows that RAEB significantly reduces unnecessary resource-consuming trials compared to baselines, and thus explores resource-consuming actions efficiently.


\noindent\textbf{Delivery Ant environment}
Figure \ref{fig:resource_ant_corridor_visualization} visualizes the position where the ant consumes resources in the training process. The results demonstrate that RAEB can efficiently explore the environments, especially exploring the resource-consuming actions.

\subsection{More Results for Ablation Study}

\subsubsection{Sensitivity analysis to $\beta$ of RAEB and SurpriseRB}

We provide additional results (Figure \ref{fig:sensitivity_analysis}) as a supplement to Section 6.2 in the main text. The results show that RAEB is insensitive to the hyperparameters $\beta$. Moreover, the results in Figure \ref{fig:sensitivity_analysis} show that SurpriseRB is very sensitive to $\beta$, which demonstrates the superiority of RAEB over SurpriseRB. 

\subsubsection{Sensitivity analysis to $\alpha$ of RAEB}
We provide additional results (Figure \ref{fig:sensitivity_alpha_electric_car}) as a supplement to Section 6.2 in the main text. The results in Figure \ref{fig:sensitivity_alpha_electric_car} show that RAEB is insensitive to $\alpha$ on Electric Mountain Car.

\subsubsection{Contribution of each component}
We provide additional results (Figures \ref{fig:component_cheetah}, \ref{fig:component_mountaincar}, and \ref{fig:component_electric_car}) as a supplement to Section 6.2 in the main text. The results show that each individual component of RAEB is significant for efficient exploration on Delivery Half-Cheetah, Delivery Mountain Car, and Electric Ant.

\section{Additional Details}

\subsection{Projection trick}
To ensure any policy satisfies resource constraints, we project the consumed resources by resource-consuming actions into the interval $[0,I(s)]$, where $s$ is the current state and $I(s)$ represents the remaining resources of the agent. For example, suppose $I(s) = 0.5$ and $a_u = 1$, then the projected resource-consuming action will be $0.5$.

\end{document}